\theoremstyle{definition}
\newtheorem{proposition}{Proposition}
\ificcvfinal\pagestyle{empty}\fi
\begin{document}

\title{Elastic Interaction Energy-Based Generative Model: Approximation in Feature Space}

\author{Chuqi Chen$^{a}$\\
{\tt\small cchenck@connect.ust.hk}
\and
Yue Wu$^{a}$\\
{\tt\small  ywudb@connect.ust.hk}
\and
Yang Xiang$^{a,b}$\thanks{Corresponding author} \\
{\tt\small maxiang@ust.hk}\\
\small\textit{$^{a}$Department of Mathematics, Hong Kong University of Science and Technology,}\\ 
\small\textit{ Clear Water Bay, Kowloon, Hong Kong}\\
\small\textit{$^b$HKUST Shenzhen-Hong Kong Collaborative Innovation Research Institute,}\\
\small\textit{ Futian, Shenzhen, China}
}

\maketitle
\ificcvfinal\thispagestyle{empty}\fi

\begin{abstract}
In this paper, we propose a novel approach to generative modeling using a loss function based on elastic interaction energy (EIE), which is inspired by the elastic interaction between defects in crystals. The utilization of the EIE-based metric presents several advantages, including its long range property that enables consideration of global information in the distribution. Moreover, its inclusion of a self-interaction term helps to prevent mode collapse and captures all modes of distribution. To overcome the difficulty of the relatively scattered distribution of high-dimensional data, we first map the data into a latent feature space and approximate the feature distribution instead of the data distribution. We adopt the GAN framework and replace the discriminator with a feature transformation network to map the data into a latent space. We also add a stabilizing term to the loss of the feature transformation network, which effectively addresses the issue of unstable training in GAN-based algorithms. Experimental results on popular datasets, such as MNIST, FashionMNIST, CIFAR-10, and CelebA, demonstrate that our EIEG GAN model can mitigate mode collapse, enhance stability, and improve model performance.
\end{abstract}


\begin{figure*}[!hbtp]
\begin{center}
     \includegraphics[width=0.9\textwidth]{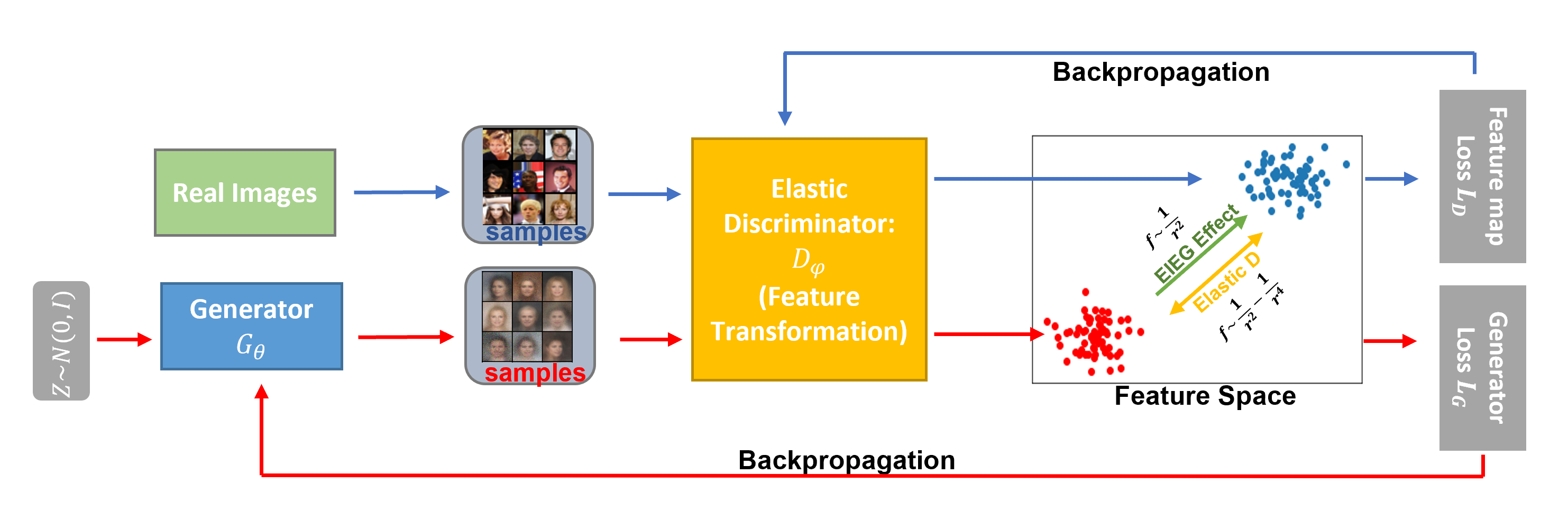}
\end{center}
\caption{Framework of EIEG GAN.}
\label{fig:framework}
\end{figure*}

\section{Introduction}

Deep generative models \cite{ref26,ref27,ref28} have gained significant attention in recent years due to their ability to generate new samples from the target distribution of the given complex dataset. 
There are three main approaches to generative modeling: likelihood-based methods \cite{ref29,ref30,ref31,ref32}, generative adversarial networks (GANs) \cite{ref1}, and score-based generative models \cite{ref8,ref9,ref10,ref11}.
In GAN-based algorithms, a critical point is to define a distance that appropriately measures the agreement
between the distribution of the generated samples $\mathbb{P}_{\theta}$ and the target distribution $\mathbb{P}_{data}$.
%
Different definitions of the distance between distributions lead to various GAN, e.g., WGAN \cite{ref3}, SobolevGAN \cite{ref7}, MMD-GAN \cite{ref4}, and others \cite{ref6,ref18,ref3,reffgan}. 

This paper introduces a novel approach to generative modeling using a loss function based on elastic interaction energy (EIE). Specifically, the proposed elastic interaction-based loss function is inspired by the elastic interaction between defects in crystals \cite{refXIANG,refLUO}. This elastic interaction energy is used as the metric between $\mathbb{P}_{\theta}$ and $\mathbb{P}_{data}$ and $E[\mathbb{P}_{data},\mathbb{P}_{\theta}] = 0 \Leftrightarrow \mathbb{P}_{data} = \mathbb{P}_{\theta}$. This metric is utilized to derive the EIEG loss, which is endowed with a long-range property, capable of capturing global information about the distribution. Moreover, the inclusion of a self-interaction term within the EIEG loss fosters a repulsive force between samples, facilitating the generation of diverse samples and mitigating the issue of mode collapse. Empirical evaluations demonstrate the efficacy of the EIEG model in approximating smooth and compact data distributions in low dimensions.

In practical problems, high-dimensional data often exhibits a more scattered distribution and complex structure.
To overcome this difficulty, we map the data into a low-dimensional latent feature space where data distribution is more compact. 
We generate the samples that minimize the EIEG loss of feature distributions instead of minimizing the distance in data distribution directly.
Specifically, we adopt the GAN framework by using a feature transformation network $D_{\phi}$ as a discriminator to map the data into a latent feature space. 
A generator network $G_{\theta}$ is then used to transform samples from a Gaussian distribution into the target high-dimension space.
The key idea of our method is demonstrated in Fig.\ref{fig:framework}.

GAN-based models often encounter the challenge of unstable training \cite{refSNGAN,refUnstable}. To address this issue, we add a stabilizing term to the loss of the feature transformation network $D_{\phi}$. We present experimental results to show the effectiveness of our proposed EIEG GAN on popular datasets such as MNIST \cite{ref15}, FashionMNIST \cite{ref20}, CIFAR-10 \cite{ref16}, and CelebA \cite{ref17}. Our experiment results demonstrate that the incorporation of the stabilizing term promotes stability in the training process, mitigates mode collapse, and yields improved model performance. Moreover, we observe that the feature space exhibits greater diversity when this term is included, thereby it enhancing the model's ability to capture the underlying data distribution. Experiments result show that EIEG GAN outperforms several standard GAN-based models\cite{ref1,ref18,ref3,ref2,ref4} in terms of both sample diversity and training stability.

The main contributions of this work are:
\begin{itemize}
    \item We propose a novel distribution metric based on elastic interaction energy (EIE) and use it to derive the EIEG loss. This metric enables us to capture global information about the distribution and effectively address the issue of mode collapse and also promote the generation of diverse samples.
    \item Our EIEG GAN model adopts a feature transformation network as a discriminator to map high dimensional data into a lower dimensional latent feature space where the feature distribution is more compact. By doing so, instead of approximating the data distribution directly, we approximate the feature distribution.
    \item The introduction of a stabilizing term into the loss function of the feature transformation network represents a novel approach to addressing the issue of unstable training in GAN-based algorithms.
\end{itemize}

\section{Elastic interaction energy-based generative model (EIEG)}

In this section, we propose a novel distribution metric based on Elastic Interaction Energy (EIE). We further adopt an EIEG Loss based on this metric. We demonstrate through experimental evaluations that under the guidance of EIEG Loss, only using a generator network can approximate low dimensional smooth data distribution well.





\subsection{Review of elastic interaction energy}

Our elastic interaction-based loss function is inspired by the elastic interaction energy between defects in crystals \cite{refXIANG,refLUO}. This interaction is characterized by its long-range effect. We consider a two-dimensional problem that the elastic interaction energy of the defects in the 2D space is
\begin{equation}
E=\sum_{i \in \mathbb{Z}} \sum_{j>i, j \in \mathbb{Z}}\left(\frac{\alpha_1}{r_{ij}^n} - \frac{\alpha_2}{r_{ij}^m} \right),
\end{equation}
where $r = \|\mathbf{x}_j-\mathbf{x}_i\|$, $\mathbf{x}_i$ stands for the position of the defect.
Specifically, for two defects fixed at locations $\mathbf{x}$ and $\mathbf{x}^{\prime}$, if the interaction energy between them is $\frac{1}{r^n}$, then there will be a repulsive force between them to minimize the energy. On the other hand, if the interaction energy between them is $-\frac{1}{r^n}$, then there will be an attractive force between them to minimize the energy.

\subsection{EIEG metric}
Therefore, in deep generative model, we extend the elastic interaction energy to be the metric between two probability density functions. Consider two probability density function $p(\mathbf{x}), q(\mathbf{x}) : \mathbb{R}^{n} \rightarrow \mathbb{R}$
\begin{equation}
 \begin{aligned}
&E[p(\mathbf{x}),q(\mathbf{x}) ] \\&=\int_{\mathbb{R}^{n}}  \int_{\mathbb{R}^{n}}(p(\mathbf{x}) - q(\mathbf{x})) \cdot \frac{(p(\mathbf{y}) - q(\mathbf{y}))}{r^{n-1}} d \Omega_\mathbf{x} d \Omega_\mathbf{y},
 \end{aligned}
\end{equation}


The proposed probability metric can be written in the following form, as shown in Eqn.(\ref{eq:EIEG_metric}):
\begin{equation}
\begin{aligned}
        E[p,q] &=\mathbb{E}_{x \sim p}\mathbb{E}_{y \sim p}(\frac{1}{r^{n-1}}) + \mathbb{E}_{x \sim q}\mathbb{E}_{y \sim q}(\frac{1}{r^{n-1}}) \\
        & \quad - 2\mathbb{E}_{x \sim p}\mathbb{E}_{y \sim q}(\frac{1}{r^{n-1}}).
\end{aligned}
\label{eq:EIEG_metric}
\end{equation}

In this expression, the first two terms in Eq. (\ref{eq:EIEG_metric}) correspond to the self-energy of the samples from their own distribution, while the last term represents the interaction energy between them.

\begin{proposition}

(i) Given two probability distribution $\mathbb{P}$ and $\mathbb{Q}$, we have $E[\mathbb{P},\mathbb{Q}] \geq 0$ and $E[\mathbb{P},\mathbb{Q}] = 0 \Leftrightarrow \mathbb{P} = \mathbb{Q}$.

 (ii) Let $\left\{\mathbb{P}_n\right\}$ be a sequence of distributions, and $\mathbb{P}_{n}$ is the corresponding probability density function. Considering $n \rightarrow \infty$, $E[\mathbb{P}_n,\mathbb{P}_{data}] \rightarrow 0 \Longleftrightarrow \|\mathbb{P}_n - \mathbb{P}_{data}\|_{\text{semi-}{H^{-\frac{1}{2}}}} \rightarrow 0$.

\end{proposition}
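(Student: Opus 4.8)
The plan is to recognize the kernel $1/r^{n-1}$ in $\mathbb{R}^n$ (up to a positive constant) as the Riesz kernel of order $1$, whose Fourier transform is a positive constant times $|\xi|^{-1}$. Writing $f = p - q$ (a real signed measure of total mass zero), the bilinear form becomes, by Plancherel,
\begin{equation}
E[p,q] \;=\; c_n \int_{\mathbb{R}^n} \frac{|\widehat{f}(\xi)|^2}{|\xi|}\, d\xi,
\label{eq:Fourier-rep}
\end{equation}
with $c_n>0$. This single identity is the engine for both parts. For part (i), nonnegativity is immediate from \eqref{eq:Fourier-rep}; and $E[p,q]=0$ forces $\widehat{f}(\xi)=0$ for a.e.\ $\xi$, hence $f\equiv 0$, i.e.\ $\mathbb{P}=\mathbb{Q}$. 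Conversely $\mathbb{P}=\mathbb{Q}$ trivially gives $E=0$. For part (ii), observe that the right-hand side of \eqref{eq:Fourier-rep} is exactly (a constant multiple of) the square of the homogeneous Sobolev seminorm $\|f\|_{\dot H^{-1/2}}^2 = \int |\xi|^{-1}|\widehat f(\xi)|^2\, d\xi$, which is what the statement calls $\|\cdot\|_{\mathrm{semi\text{-}}H^{-1/2}}$. Thus $E[\mathbb{P}_n,\mathbb{P}_{data}] = c_n\|\mathbb{P}_n - \mathbb{P}_{data}\|_{\mathrm{semi\text{-}}H^{-1/2}}^2$, and the claimed equivalence of convergences is then a tautology (a nonnegative sequence tends to $0$ iff its square root does).

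Concretely I would proceed in the following order. First, fix the normalization: recall the Riesz potential identity $\widehat{\,|x|^{-(n-1)}}(\xi) = \kappa_n |\xi|^{-1}$ for an explicit $\kappa_n > 0$ (valid as tempered distributions, since $0 < 1 < n$), so that convolution with $|x|^{-(n-1)}$ corresponds to multiplication by $\kappa_n|\xi|^{-1}$ on the Fourier side. Second, justify rewriting the double integral defining $E[p,q]$ as $\langle f, |x|^{-(n-1)} * f\rangle$ and then applying Parseval to obtain \eqref{eq:Fourier-rep}; this requires $f = p-q$ to lie in a space where the manipulation is legitimate (e.g.\ $\widehat f \in L^2$ with $|\xi|^{-1/2}\widehat f \in L^2$, equivalently $f \in \dot H^{-1/2}$), which I would simply impose as the standing regularity assumption on the densities. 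Third, read off (i) and (ii) from \eqref{eq:Fourier-rep} as described above.

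The main obstacle is analytic bookkeeping rather than conceptual difficulty: the kernel $1/r^{n-1}$ is singular on the diagonal and only polynomially decaying, so the double integral is not absolutely convergent for arbitrary signed measures, and the Fourier-transform identity for the Riesz kernel must be handled in the distributional sense. One must therefore restrict to $p,q$ (or their difference) in an appropriate class — mass-zero, enough decay/integrability, finite $\dot H^{-1/2}$ seminorm — for \eqref{eq:Fourier-rep} to hold verbatim; the mass-zero condition $\int f = 0$, i.e.\ $\widehat f(0)=0$, is also what makes $\int |\xi|^{-1}|\widehat f|^2\,d\xi$ finite near the origin despite the $|\xi|^{-1}$ singularity there. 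A secondary point in part (i) is the implication $\widehat f = 0$ a.e.\ $\Rightarrow f = 0$: since $f$ is (the difference of) probability densities this is just Fourier inversion/injectivity. Once the function class is pinned down, both statements follow essentially for free from the Fourier representation, and part (ii) is literally a restatement of \eqref{eq:Fourier-rep}.
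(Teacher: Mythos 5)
Your proposal is correct and follows essentially the same route as the paper's own proof: both rewrite $E[p,q]=E[p-q]$ via Parseval/Plancherel using the Riesz-kernel identity $\mathcal{F}(1/r^{n-1})\propto|\xi|^{-1}$, obtaining $E[p,q]$ as a constant times the squared semi-$H^{-1/2}$ seminorm of $p-q$, from which (i) and (ii) are immediate. The only difference is that you work in general dimension and are more explicit about the distributional justification and function class, whereas the paper carries out the computation in the two-dimensional case and asserts the general case is analogous.
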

The details of the proof are provided in Appendix \ref{Appendix1}.

We propose to incorporate the EIE metric into the loss function of our generative model. It is important to note that the integrands of the double integrals in the long-range elastic energy-based metric contain singularities of the form $1/r^{n-1}$. A cut-off for the loss function should be established to address this challenge. Specifically, the distance between two samples, $\mathbf{x}\sim p$ and $\mathbf{y}\sim q$, both of which belong to $\mathbb{R}^n$, is set as follows:

\begin{equation}
e(x,y)= \begin{cases} \frac{1}{r^{n-1}}, & \text { if } r > R,
\\ 
(\frac{n+1}{n}R^{n} -\frac{1}{n}r^{n})\frac{1}{R^{2n-1}} , & \text { if } r \leq R, \end{cases}
\end{equation}
where $r = \|\mathbf{x} - \mathbf{y}\|$, $n$ is the dimension of the samples. The term for $r \leq R$ is designed to guarantee the smoothness of $e(x,y)$ and to make the gradient of $e(x,y)$ linear decay to zero when $x \rightarrow y$. Thus the objective function for the generative model we use is
\begin{equation}
\begin{aligned}
    \mathcal{M}_e(p,q) &= \mathbb{E}_{x \sim p}\mathbb{E}_{y \sim p}e(x,y) + \mathbb{E}_{x \sim q}\mathbb{E}_{y \sim q}e(x,y) \\
    & \quad - 2\mathbb{E}_{x \sim p}\mathbb{E}_{y \sim q}e(x,y).
\end{aligned}
\end{equation}

In practice, we use finite samples from distributions to estimate EIEG metric. Given $X = [{x_1,\dots,x_N}] \sim \mathbb{P}$ and $Y = [{y_1,\dots,y_N}] \sim \mathbb{Q}$, the estimator of $\mathcal{M}_e(p,q)$ is
\begin{equation}
\begin{aligned}
    \mathcal{\hat{M}}_e(X,Y)  &=  \frac{1}{N^2}\sum_{i,j=1}^{N}e(x_i,x_j) +  \frac{1}{N^2}\sum_{i,j=1}^{N}e(y_i,y_j)\\
    & \quad -  \frac{2}{N^2}\sum_{i,j=1}^{N}e(x_{i},y_{j}).
\end{aligned}
\end{equation}

When the proposed metric $\mathcal{M}_e(\mathbb{P},\mathbb{P}_n)$ is minimized with $\mathbb{P}$ fixed as the target distribution, the training process can be regarded as a physical evolution process to reduce the total energy. In this process, the generated samples $Y \sim \mathbb{P}_n$ are considered as particles that move towards the region of data samples $X \sim \mathbb{P}$ under the negative gradient of $\mathcal{\hat{M}}_e(X,Y)$, which can be regarded as force between $X$ and $Y$. The ultimate goal of this process is to make the distribution of the generated samples $\mathbb{P}_n$ approach the target distribution $\mathbb{P}$. By analyzing the negative gradient direction of $\mathcal{\hat{M}}_e(X,Y)$, we observe that the force between the generated samples and data samples is attractive. At the same time, the generated samples exhibit a repulsive force between each other, as shown in Fig.\ref{Fig:gradient}.
\begin{figure}[!hbtp]
    \centering
    \begin{subfigure}
        \centering
        \includegraphics[width=0.2\textwidth]{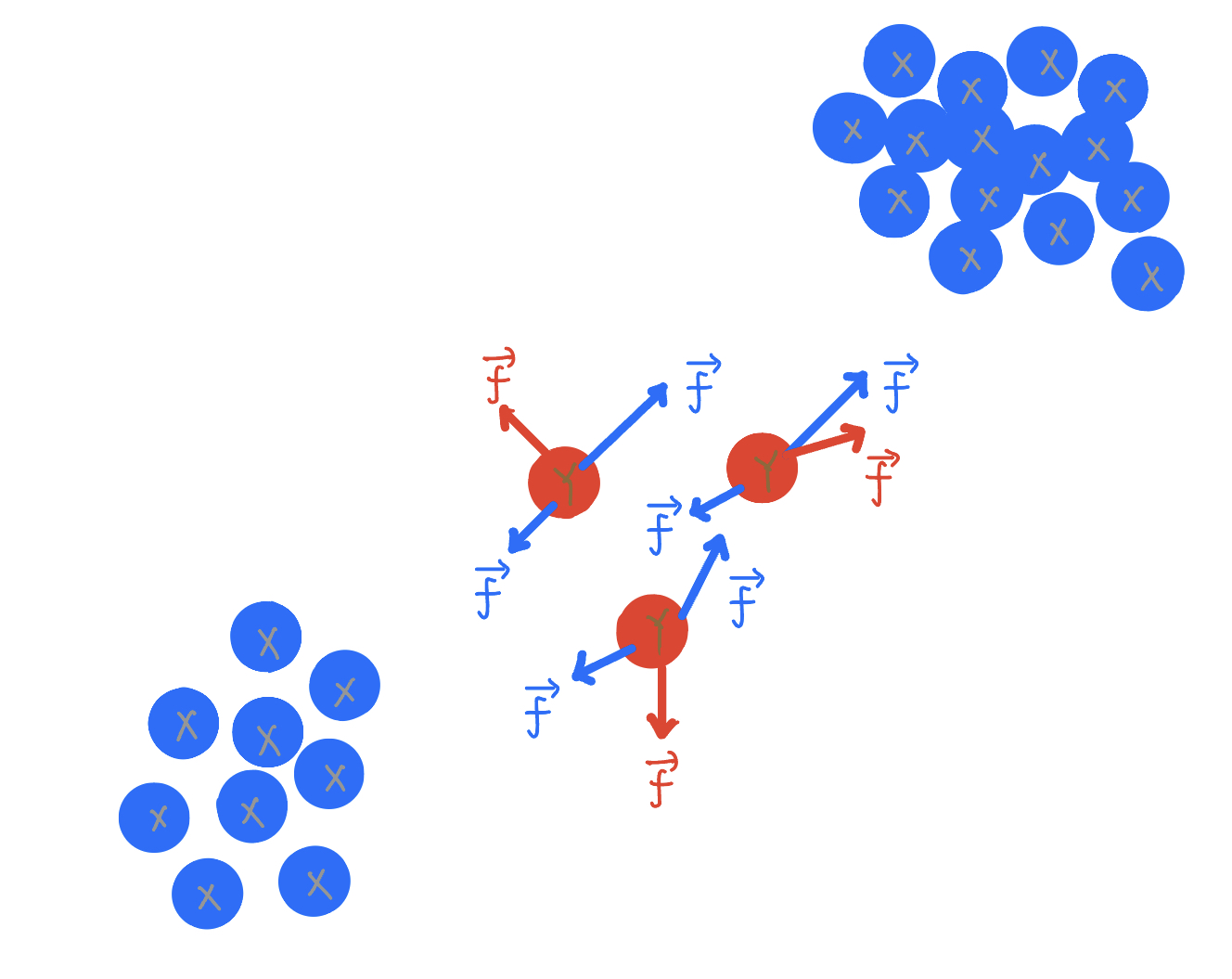}
    \end{subfigure}
    \begin{subfigure}
        \centering
        \includegraphics[width=0.2\textwidth]{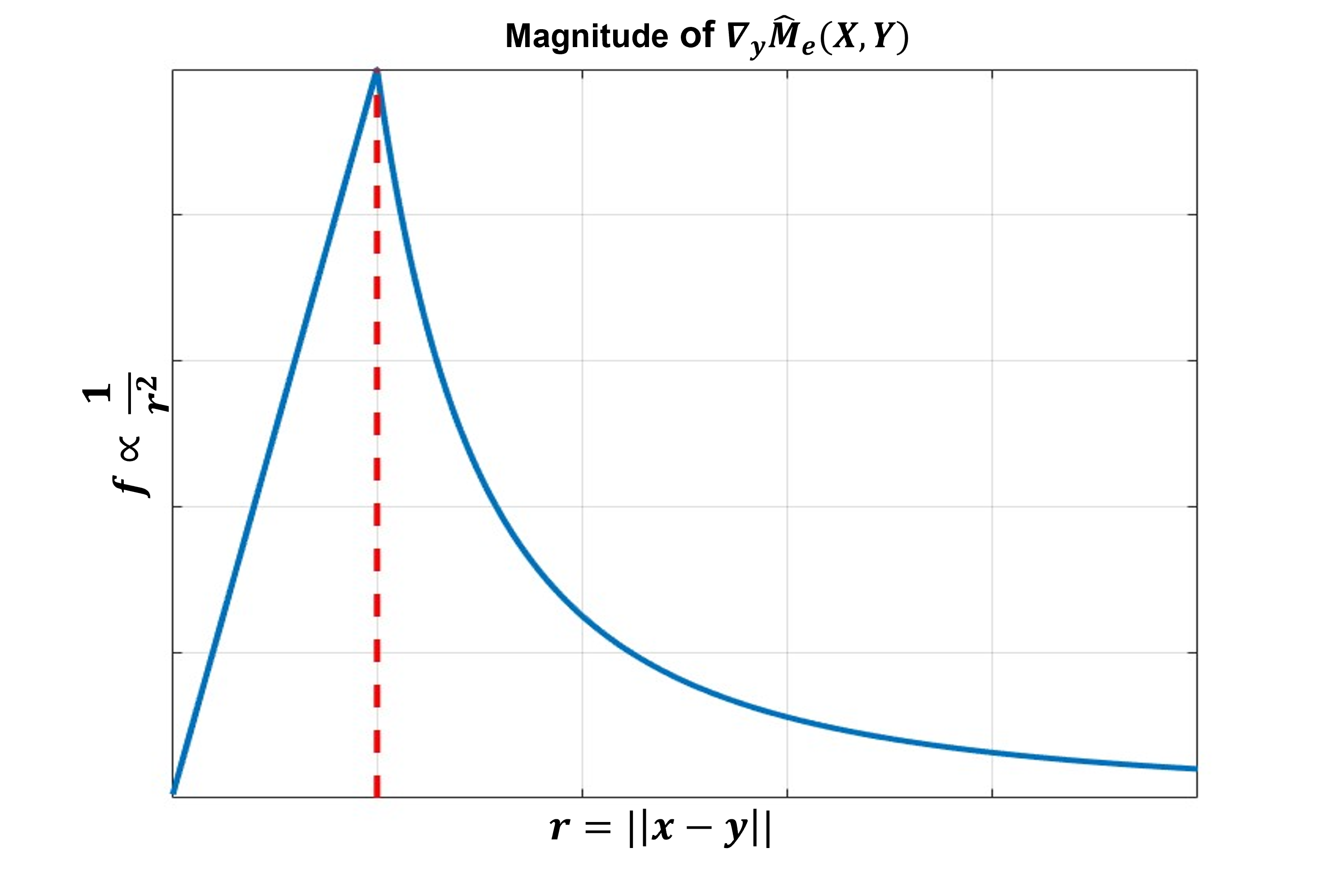}
    \end{subfigure}
    \caption{Left: gradient direction of $-\nabla_{Y_{j}} \mathcal{\hat{M}}_e(X,Y)$. Right: The magnitude of the force in 2D case. } 
\label{Fig:gradient}
\end{figure}

\subsection{EIEG loss to generative model}

We directly train a single generator network $G_{\theta}$ to map random variables  $Z \sim \mathcal{N}(0,I)$ into the samples of the target distribution under the guidance of our proposed loss, i.e.,we want $G_{\theta}(Z) \sim \mathbb{P}_{data}$. Thus the loss function for the generator network $G_{\theta}$
\begin{equation}
\begin{aligned}
    \min _\theta \mathcal{L} _{G} &=  \frac{1}{N^2}\sum_{i,j=1}^{N}e(G_{\theta}(Z_{i}),G_{\theta}(Z_{j})) \\
    & \quad -  \frac{2}{N^2}\sum_{i,j=1}^{N}e(X_{i},G_{\theta}(Z_{j})),
\end{aligned}
\label{Loss_gen_1}
\end{equation}
where $Z_{i} \sim \mathcal{N}(0,I)$, $X_{i} \sim \mathbb{P}_{data}$, and $N$ is the batch size during training. 

\paragraph{Experiments on smooth distributions}

We generate samples from a series of 2-dimensional Mixture of Gaussian (MoGs) \cite{ref8,ref13,ref14} by using a simple two layers fully connected network. The architecture and hyperparameters of the model are presented in Appendix \ref{Appendix4}. 

\begin{figure}[!hbtp]
\centering
    \begin{subfigure}
    \centering
    \includegraphics[width=0.15\textwidth]{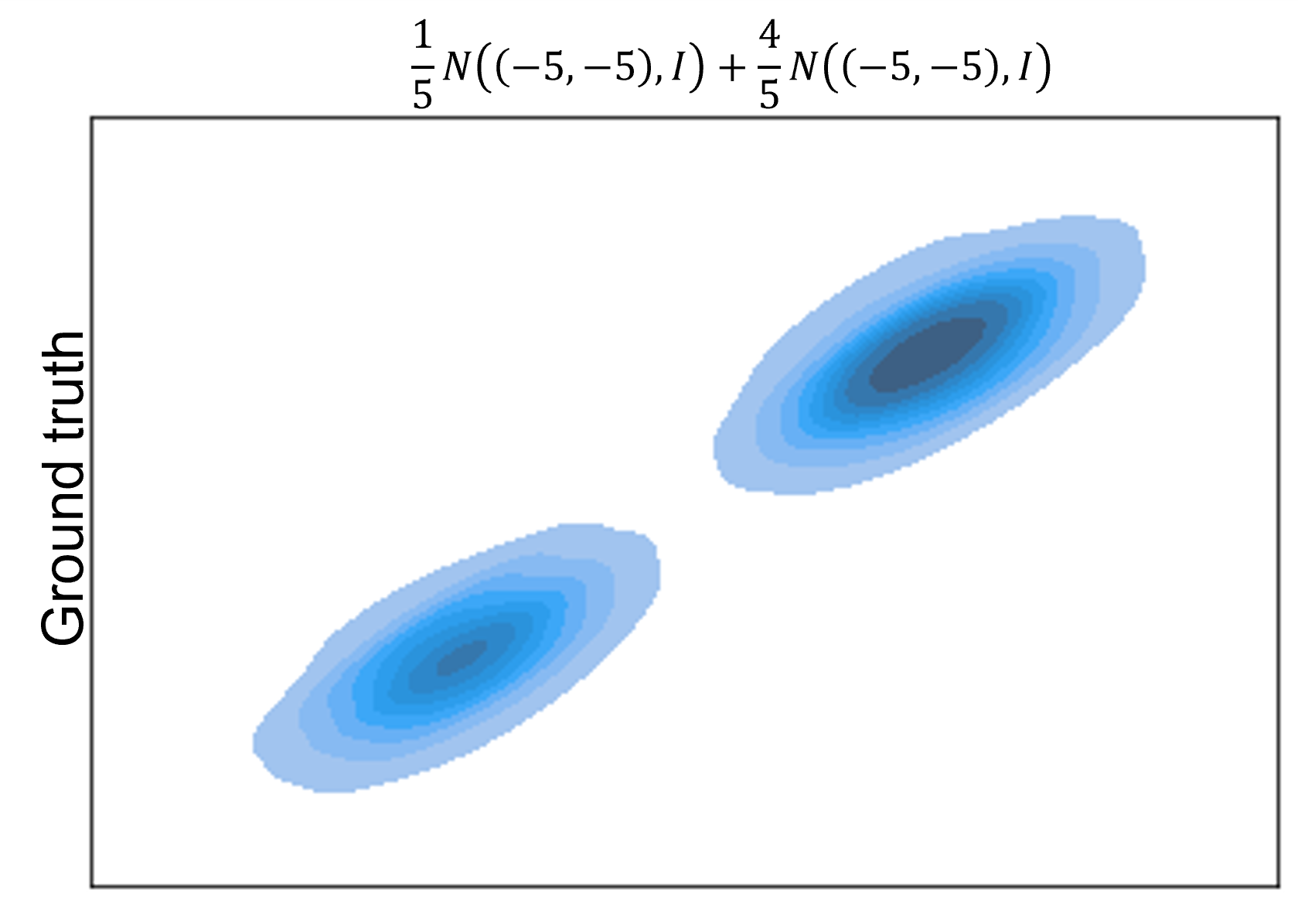}
    \label{Fig.sub.1}
    \end{subfigure}
    \begin{subfigure}
    \centering
    \includegraphics[width=0.15\textwidth]{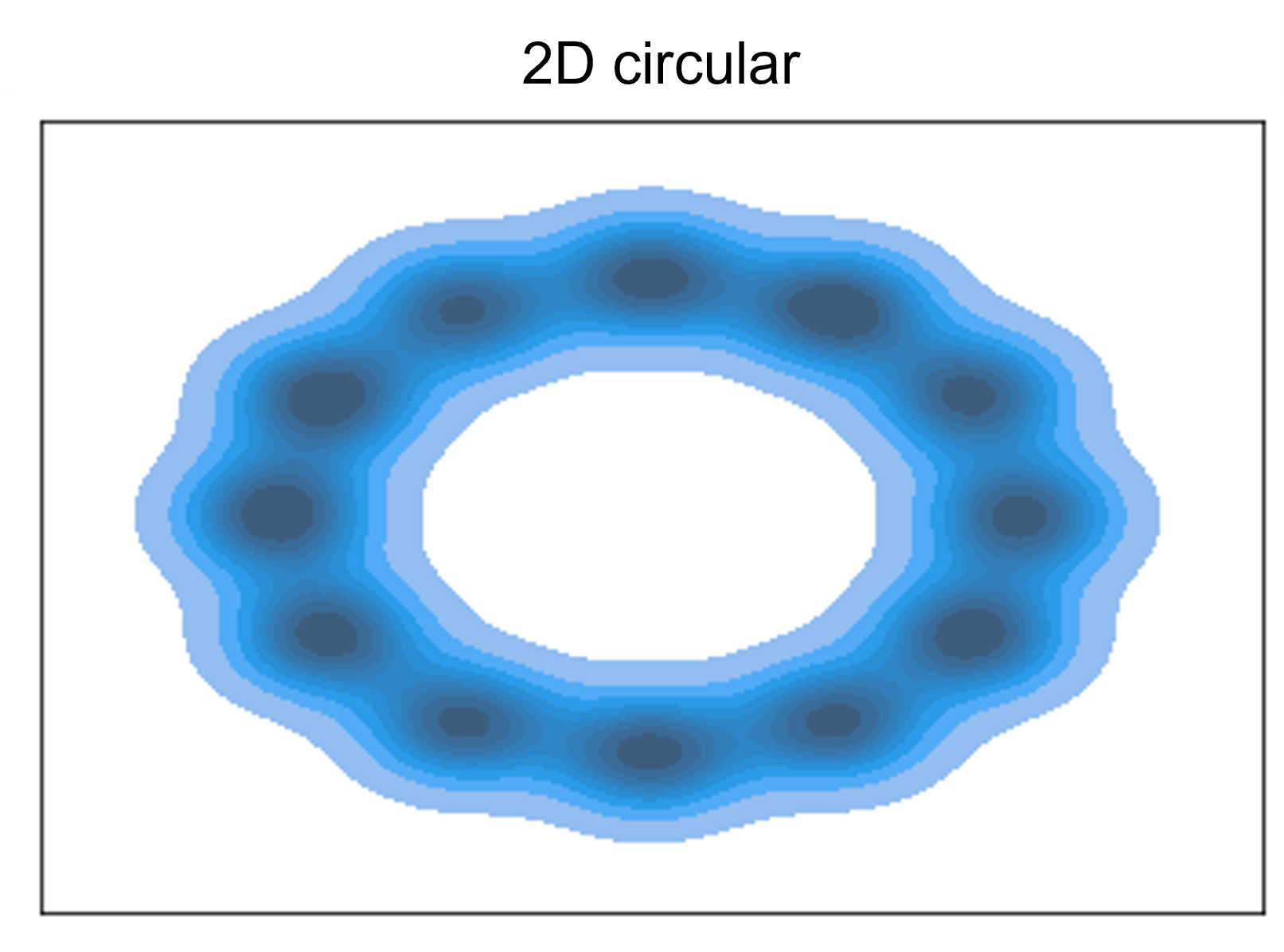}
    \label{Fig.sub.2}
    \end{subfigure}
    \begin{subfigure}
    \centering
    \includegraphics[width=0.15\textwidth]{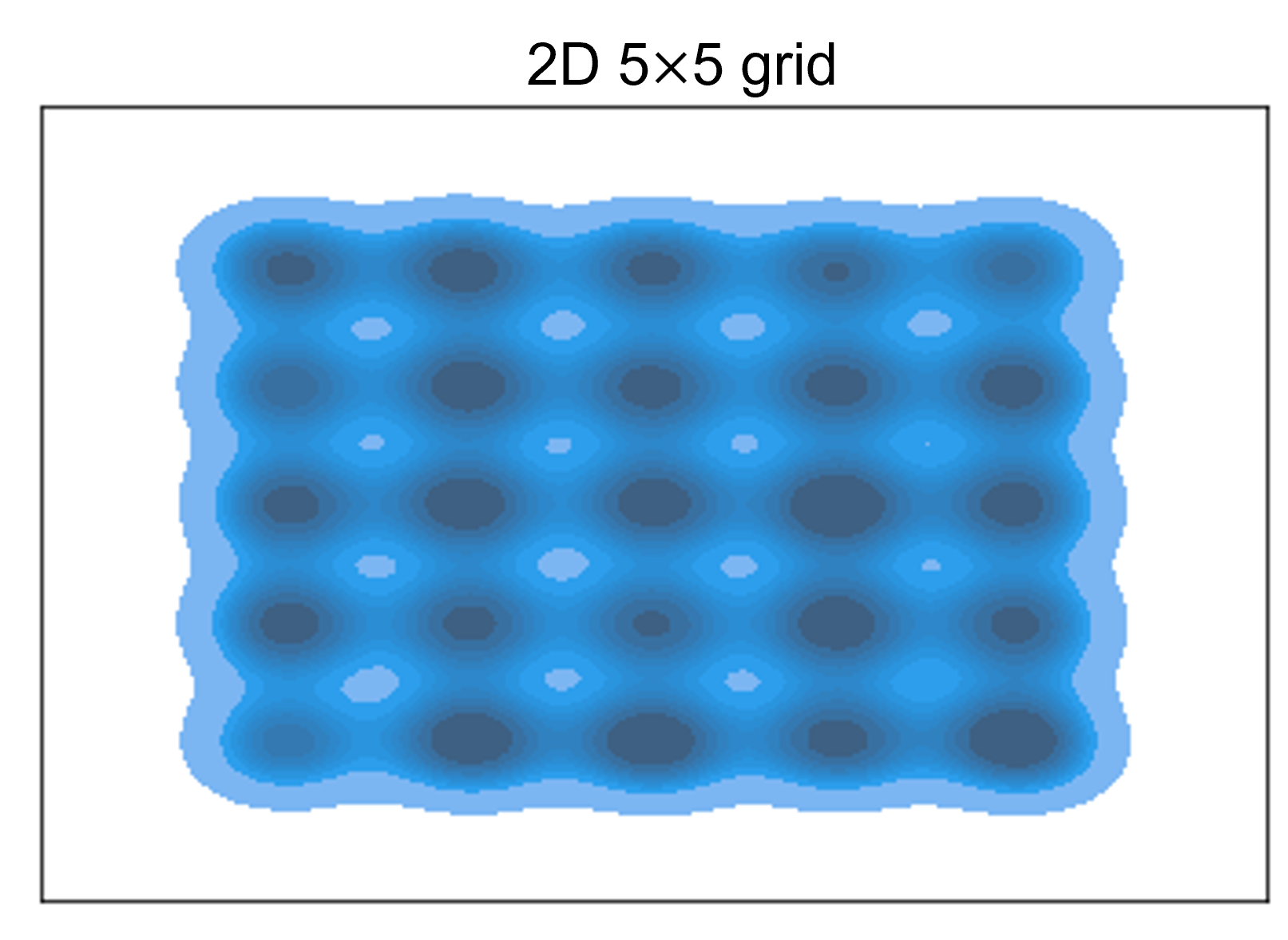}
    \label{Fig.sub.3}
    \end{subfigure}
    
\centering
    \begin{subfigure}
    \centering
    \includegraphics[width=0.15\textwidth]{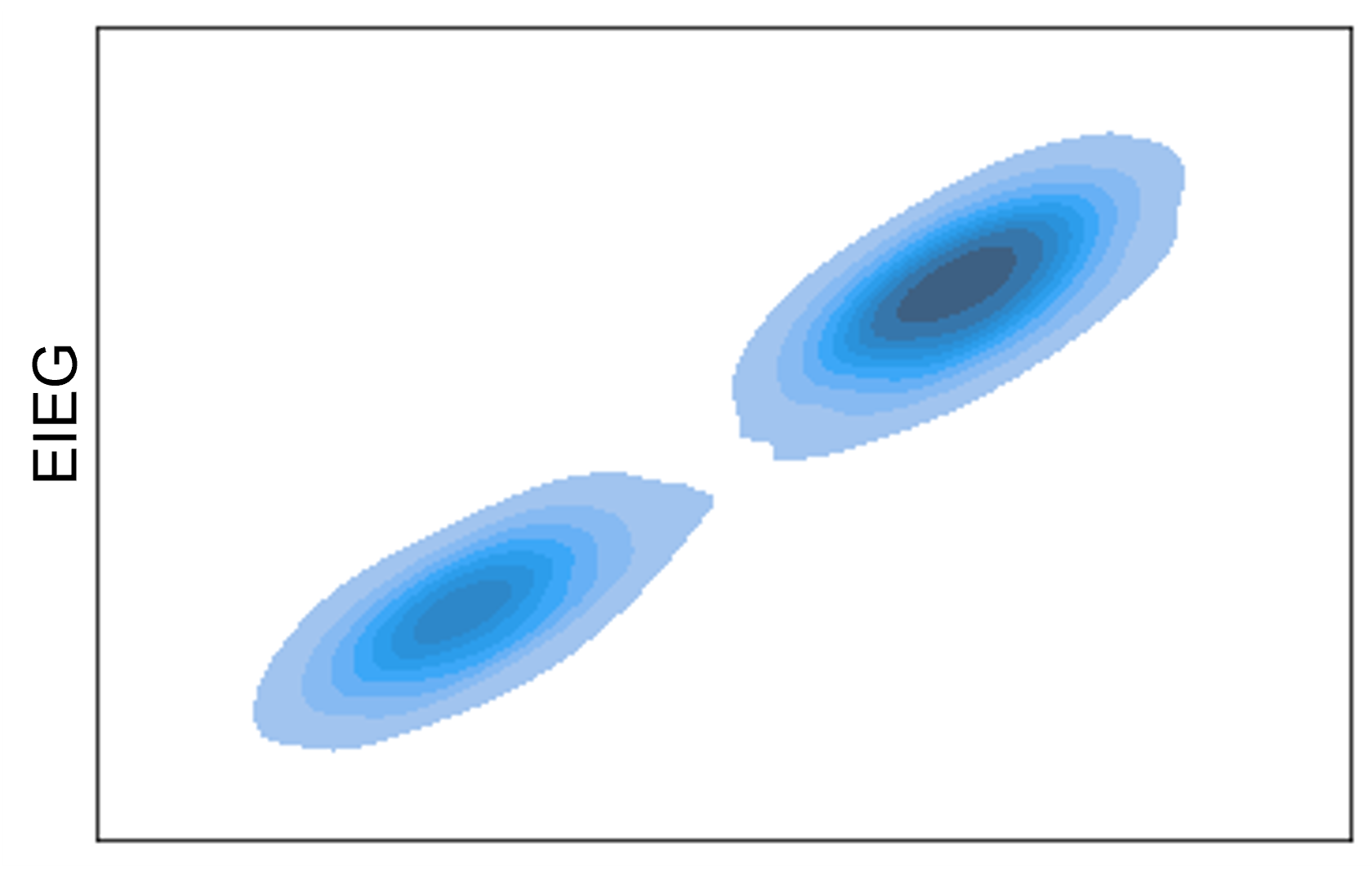}
    \end{subfigure}
    \begin{subfigure}
    \centering
    \includegraphics[width=0.15\textwidth]{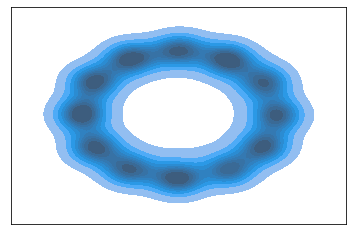}
    \end{subfigure}
    \begin{subfigure}
    \centering
    \includegraphics[width=0.15\textwidth]{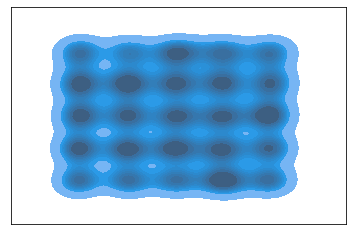}
    \end{subfigure}
    \caption{First row: KDE plots for (a) Mixture of Gaussian with $p_{data} = \frac{1}{5} \mathcal{N}((-5,-5),I) + \frac{4}{5}\mathcal{N}((5,5),I)$ (b) 2-D mixture of eight Gaussians arranged in a circle (c) 2-D mixture  of 25 Gaussians arranged on $5\times5$ grid. Second row: distribution of samples generated by EIEG.}
\label{Fig:EIEG_Toy_Example}
\end{figure}

As shown in Fig.\ref{Fig:EIEG_Toy_Example}, under the guidance of the EIEG Loss  Eqn.\eqref{Loss_gen_1}, we find that simple neural networks can generate samples from data distributions and provide good approximations of the data distributions in low dimension.  

\paragraph{Importance of self-interation term}

In the EIEG loss Eqn.\eqref{Loss_gen_1}, the first term represents the self-interaction between the generated samples, reflecting the repulsive force among them. This term plays a significant role in the overall loss function and is crucial for ensuring the quality and diversity of the generated samples. Our experimental results demonstrate that eliminating this term causes the generative samples to collapse, as shown in Fig. \ref{Fig:collapse}. Therefore, we emphasize the importance of the repulsive force term in our proposed loss function for generating diverse samples and overcoming collapsing problems.

\begin{figure}[!hbtp]
    \centering
    \begin{subfigure}
        \centering
        \includegraphics[width=0.23\textwidth]{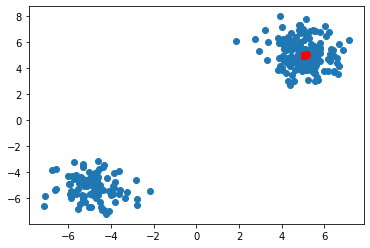}
    \end{subfigure}
    \begin{subfigure}
        \centering
        \includegraphics[width=0.23\textwidth]{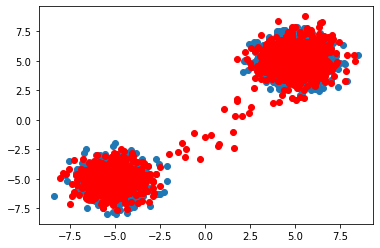}
    \end{subfigure}
    \caption{Data samples and generative samples for $ \frac{1}{5} \mathcal{N}((-5,-5),I) + \frac{4}{5}\mathcal{N}((5,5),I)$. Left: without self-interaction terms in $\mathcal{L}_{G}$. Right: with self-interaction terms. Blue points represent samples from the dataset, and red points represent the generated samples. }
\label{Fig:collapse}
\end{figure}

\FloatBarrier

\section{Elastic interaction energy-based GAN (EIEG GAN)}

In practical problems, we often have to deal with high-dimensional data with scattered distribution which makes it hard to directly approximate the data distribution. (More discussion can be found in Appendix \ref{Appendix2}.) To overcome this difficulty, we map the data into a low-dimensional latent feature space with compact distribution first, and then,  instead of approximating the data distribution, we are going to approximate the feature distribution. In this way, we can use the EIEG model to approximate the feature distribution easily.

\subsection{EIEG GAN}

To generate samples from high-dimensional data with complex distribution, we utilize two neural networks: a feature transformation network $D_{\phi}$ and a generator $G_{\theta}$. The former transforms the original data $X \sim \mathbb{P}_{data}$ into a feature space, while the latter maps random variables $Z \sim \mathcal{N}(0,I)$ into samples of the target distribution $\mathbb{P}_{data}$, guided by the EIEG loss through feature transformation network $D_{\phi}$. Our goal is to approximate the feature distribution such that $D{\phi}(G_{\theta}(Z)) \sim \mathbb{P}_{feature}$. 

EIEG GAN is under the framework of GAN \cite{ref1}. The method include a generator $G_{\phi}$ and a feature transformation network $D_{\phi}$ as 'discriminator'. We also add a stabilizing term in the loss of $D_{\phi}$. We call $D_{\phi}$ as elastic discriminator in our EIEG GAN. Also, the output size of the Elastic Discriminator is not 1, but rather the dimensionality of the feature space. This is because the Elastic Discriminator is not trying to discriminate between real and fake samples, but rather to transform the input samples into a feature space where it is easier for the generator to learn the underlying distribution of the data. Due to the extra stabilizing term, the loss of $G_{\theta}$ and the loss of $D_{\phi}$ are different. The training strategy we employ involves alternating between training the feature transformation network $D_{\phi}$ and the generator network $G_{\theta}$. Fig.\ref{fig:framework} illustrates our model's framework.

Specially, in Elastic Discriminator $D_{\phi}$ , the loss function is:
\begin{equation}
\begin{aligned}
    \max _\phi     \mathcal{L} _{D}&=\frac{1}{N^2}\sum_{i,j=1}^{N}\widetilde{e}_{\phi}(x_{i},x_{j}) + \frac{1}{N^2}\sum_{i,j=1}^{N}\widetilde{e}_{\phi}(y_{i},y_{j}) \\
    & \quad -  \frac{2}{N^2}\sum_{i,j=1}^{N}\widetilde{e}
    _{\phi}(x_{i},y_{j}),
\end{aligned}
\end{equation}
where $ {e}_{\phi}(x,y) = e(D_{\phi}(x),D_{\phi}(y)) - \varepsilon e_{s}(D_{\phi}(x),D_{\phi}(y)))$, $e_{s}(x,y)$ is the stabilizing term in Eq.\eqref{Eqn:Higher_order_e} in which $m>n$.

\begin{equation}
\begin{aligned}
    e_{s}(x,y) = \begin{cases} \frac{1}{r^{m-1}}, & \text { if } r>R \\ (\frac{m+1}{m}R^{m} - \frac{1}{m}r^{m})\frac{1}{R^{2m-1}}, & \text { if } r \leq R\end{cases}.
\end{aligned}
\label{Eqn:Higher_order_e}
\end{equation}

Specially, in Generator $G_{\phi}$, the loss function is :
\begin{equation}
\begin{aligned}
    \min _\theta \mathcal{L} _{G} &=  \frac{1}{N^2}\sum_{i,j=1}^{N}e_{\phi}(G_{\theta}(Z_{i}),G_{\theta}(Z_{j})) \\
    & \quad -  \frac{2}{N^2}\sum_{i,j=1}^{N}e_{\phi}(X_{i},G_{\theta}(Z_{j})),
\end{aligned}
\label{Loss_gen}
\end{equation}
where $Z_{i} \sim \mathcal{N}(0,I)$, $X_{i} \sim \mathbb{P}_{data}$, and $N$ is the batch size during training. The training strategy we employ involves alternating between training the feature transformation network $D_{\phi}$ and the generator network $G_{\theta}$. 

The proposed algorithm is shown in Algorithm \ref{Alg: EIEG-GAN}.

\begin{algorithm}\label{Alg: EIEG-GAN}
\caption{EIEG GAN}
\SetKwData{Left}{left}\SetKwData{This}{this}\SetKwData{Up}{up}
\SetKwFunction{Union}{Union}\SetKwFunction{FindCompress}{FindCompress}
\SetKwInOut{Input}{\textbf{input}}\SetKwInOut{Output}{\textbf{output}}
\Input{$\alpha$ the learning rate, $B$  the batch size, $Epoch$ the training epoch,$n_c$  the number of iterations of discriminator per generator update.}
initialize generator parameter $\theta$  and discriminator parameter  $\phi$\;
\For{$epoch\leftarrow 1$ \KwTo $Epoch$}{
    \For{$k\leftarrow 1$ \KwTo $n_c$}{\label{forins}
      Sample a minibatches $\left\{x_i\right\}_{i=1}^B \sim \mathbb{P}(\mathcal{X})$  and $\left\{z_j\right\}_{j=1}^B \sim \mathcal{N}(0,I)$\; 
      $D_\phi \leftarrow \nabla_\phi \mathcal{L}_{D}$\; 
      $\phi \leftarrow \phi+\alpha \cdot \operatorname{Adam}\left(\phi, D_\phi\right)$\;
    }
    Sample a minibatches $\left\{x_i\right\}_{i=1}^B \sim \mathbb{P}(\mathcal{X})$  and $\left\{z_j\right\}_{j=1}^B \sim \mathcal{N}(0,I)$\; 
    $G_\theta \leftarrow \nabla_\phi \mathcal{L}_{G}$\; 
      $\theta \leftarrow \theta-\alpha \cdot \operatorname{Adam}\left(\theta, G_\theta\right)$}
\Output {$G_{\theta}(z_{j})$ the generative samples.}
\end{algorithm}

It is worth noting that the loss function for $D_{\phi}$ is based on prior research \cite{refXIANG,refLUO}. The inclusion of the stabilizing term $e_{s}(x,y)$ in $\mathcal{L}_{D}$ greatly enhances the stability of the training process for our proposed algorithm. Furthermore, this stabilizing term serves as a repulsive force, preventing the collapse of data points in the feature space, which ultimately results in the generation of higher quality samples.

There are several advantages of EIEG GAN.

\paragraph{(1) Necessity of feature transformation network} 
We conduct experiments on utilizing a single generator network, $G_{\phi}$, in generating samples from two widely used datasets, MNIST and CIFAR-10, using our proposed loss function. Fig.\ref{Fig:EIEG} demonstrates the outcomes of our experiment. However, we observe poor results, which can be attributed to the scattered distribution of data points in the high-dimensional space, rendering the approximation of our loss function challenging. Therefore, we conclude that for datasets with intricate structures and high dimensionality, the EIEG-GAN architecture, which incorporates a feature transformation network, is necessary to attain optimal results.

\begin{figure}[!hbtp]
    \centering
    \centering
    \begin{subfigure}
        \centering
        \includegraphics[width=0.23\textwidth]{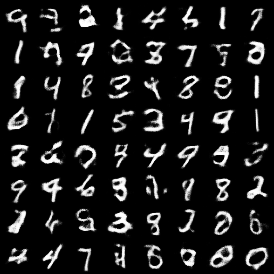}
    \end{subfigure}
    \begin{subfigure}
        \centering
        \includegraphics[width=0.23\textwidth]{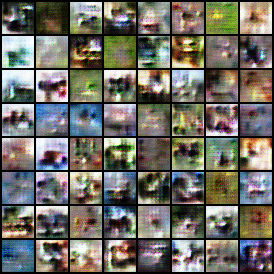}
    \end{subfigure}
    \caption{Generated samples on MNIST and CIFAR10 using only a generator under our proposed loss.}
    \label{Fig:EIEG}
\end{figure}
\FloatBarrier

\paragraph{(2) Stable training process} 
One of the advantages of our EIEG GAN model lies in the incorporation of the stabilizing term in $\mathcal{L}_{D}$, which ensures a stable training process. The absence of the stabilizing term can render the training of EIEG GAN unstable. To illustrate this, we plot the learning curves of our models and evaluate the quality of our generative samples from the MNIST and CIFAR-10 datasets using Inception Scores, as shown in Fig.\ref{Fig:EIEG:stability}. A comparison between the training curves with (orange) and without (blue) the stabilizing term is presented. Our findings demonstrate that the stabilizing term enhances the stability of the training process and leads to the generation of higher quality examples.

\begin{proposition}
    The training processes of elastic discriminator $\max \mathcal{L}_{D}$ (if $\varepsilon >$ a critical value) and generator $\min \mathcal{L}_{G}$ are stable. 
\end{proposition}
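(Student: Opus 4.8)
The plan is to make precise the two ways this kind of training becomes ``unstable'' --- the loss running off to $\pm\infty$ along the iteration, and the elastic discriminator collapsing the feature representation to a single point (the mechanism behind the vanishing/exploding generator gradients and mode collapse we want to exclude) --- and to show that $\varepsilon$ above an explicit threshold rules out both. I would carry this out at the level of the empirical energies, regarding the feature vectors $u_i=D_\phi(x_i)$ and $v_j=D_\phi(G_\theta(z_j))\in\R^n$ as the effective degrees of freedom (the infinite-capacity, free-particle idealization), so that the whole question reduces to the one-dimensional shape of the effective pairwise potential $g(r):=e(r)-\varepsilon\,e_s(r)$, where $e$ is the cut-off EIEG kernel and $e_s$ is the stabilizer of Eq.~(\ref{Eqn:Higher_order_e}), and $r$ is a feature-space distance.

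First I would record the boundedness facts that already give a weak form of stability. Both cut-off kernels are nonnegative, nonincreasing and $C^1$, and attain their maxima at $r=0$, namely $\sup e=\tfrac{n+1}{nR^{n-1}}$ and $\sup e_s=\tfrac{m+1}{mR^{m-1}}$; moreover, by construction $\nabla e$ and $\nabla e_s$ decay \emph{linearly} to $0$ as $r\to0$ and vanish as $r\to\infty$. Hence the self--self terms of $\mathcal{L}_D$ are each at most $\sup e$ while the cross term is bounded in absolute value by $2(\sup e+\varepsilon\sup e_s)$, so $\mathcal{L}_D$ is uniformly bounded above and gradient ascent on $\phi$ cannot send it to $+\infty$. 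For the generator, with $\phi$ frozen, $\mathcal{L}_G$ equals the empirical EIEG metric in feature space minus the $\theta$-independent constant $\tfrac1{N^2}\sum_{i,j}e_\phi(x_i,x_j)$; Proposition~1(i), applied to the feature-space kernel, makes the metric nonnegative up to a vanishing discretization error, and the subtracted constant is bounded by $\sup e+\varepsilon\sup e_s$, so $\mathcal{L}_G$ is bounded below and $\min_\theta\mathcal{L}_G$ is well posed. The linear decay of the forces at coincidence, together with their boundedness, also shows that the particle trajectories of $u_i,v_j$ under the (negative) gradients cannot diverge in finite time.

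The role of the critical $\varepsilon$ is then isolated by studying $g$ on $[0,\infty)$. For $\varepsilon=0$ the self--self energy $\tfrac1{N^2}\sum_{i,j}e(u_i,u_j)$ is maximized by $r_{ij}\to0$, i.e.\ by collapsing all feature vectors to a point --- exactly the configuration we must forbid. With the stabilizer on, one checks that on $(R,\infty)$ the function $g(r)=r^{1-n}-\varepsilon r^{1-m}$ has a unique interior critical point $r^\star=(\varepsilon(m-1)/(n-1))^{1/(m-n)}$, which is a strict maximum with $g(r^\star)=(r^\star)^{1-n}\tfrac{m-n}{m-1}>0$; and that on $[0,R]$, from $g'(r)=-r^{n-1}R^{1-2n}+\varepsilon r^{m-1}R^{1-2m}$, $g$ has at most one interior critical point (a minimum), so $\sup_{[0,R]}g=\max\{g(0),g(R)\}$. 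A one-line comparison of these values then gives the explicit threshold: if
\begin{equation}
\varepsilon\;>\;\varepsilon_c\;:=\;\frac{m(n+1)}{n(m+1)}\,R^{m-n},
\end{equation}
then $g(0)<0$, $g(R)<0$ and $r^\star>R$, so the global maximum of $g$ over $[0,\infty)$ is attained at $r^\star>R$. Consequently the maximizer of the self--self energies puts every pair of like feature vectors at mutual distance $\approx r^\star>R$, a manifestly non-collapsed configuration, and the cross term is extremized at a distance that likewise stays in the bounded, regular regime; combining this with the boundedness of $\mathcal{L}_D$ and $\mathcal{L}_G$ above yields the stated stability.

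The main obstacle I expect is closing the gap between this free-particle picture and the genuine dynamics on $\phi,\theta$: the argument controls the geometry of the energy landscape in feature space, but the parameters act through the networks $D_\phi,G_\theta$, so turning ``the landscape has no collapsing maximizer'' into ``the iterates stay bounded and collapse-free'' requires either assuming the networks are expressive enough that the reduction to $g(r)$ is exact, or adding a quantitative Lipschitz control on $D_\phi,G_\theta$ and pushing the bounded feature-space forces back to bounded parameter updates. A secondary point is that Proposition~1(i) was stated for the exact $1/r^{n-1}$ kernel; one must re-run its Riesz-kernel / semi-$H^{-1/2}$ argument with the cut-off, and check that the $-\varepsilon e_s$ correction (which need not be conditionally positive definite when $m>n$) only shifts the lower bound by the bounded amount $\varepsilon\sup e_s$, which is all that is needed here.
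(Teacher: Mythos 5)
Your route is genuinely different from the paper's, and while the landscape computation at its core is correct, it does not establish the statement in the sense the paper proves it. The paper's proof (Appendix C) is a linearized stability analysis of the mean-field \emph{dynamics}: it writes the Wasserstein gradient flow for the density $P_\theta$ (resp.\ $P_\phi$) induced by $\min\mathcal{L}_G$ (resp.\ $\max\mathcal{L}_D$), perturbs the uniform equilibrium $P=C$ by $v$, and reads off the dispersion relation in Fourier space: $\hat v=e^{-C|\bm{\xi}|t}$ for the generator (stable), $\hat v=e^{C|\bm{\xi}|t}$ for the discriminator without the stabilizer (unstable), and growth rate $C(1-\varepsilon|\bm{\xi}|^2)|\bm{\xi}|$ with the stabilizer, which is negative for all admissible frequencies $|\bm{\xi}|\ge\pi$ on the normalized domain once $\varepsilon$ exceeds a critical value. ``Stable'' there means \emph{perturbations of the equilibrium decay under the training flow}, and the instability being excluded is exponential clumping of the feature density, not divergence of the loss.

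Your argument addresses two different things. The boundedness observations hold for every $\varepsilon\ge 0$, since all kernels are cut off; indeed the discriminator loss is bounded above even without $e_s$, and the unstabilized training is nonetheless unstable in the paper's sense. So boundedness cannot be the mechanism by which $\varepsilon$ matters, and it does not distinguish the stabilized from the unstabilized case. The part of your argument that does isolate $\varepsilon$ --- the maximum of $g(r)=e(r)-\varepsilon e_s(r)$ moving to $r^\star>R$ for $\varepsilon>\varepsilon_c$ --- is a correct and genuinely informative statement about the \emph{global maximizer} of the pairwise energy (it explains why the optimal discriminator does not collapse the feature cloud, i.e.\ the anti--mode-collapse property illustrated in Fig.~\ref{Fig:extraterm feature}), but a statement about where the optimum sits is not a statement about whether small perturbations of the training equilibrium grow or decay along the way; a landscape can have a benign global maximizer and still support an unstable equilibrium or runaway transient. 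Your two thresholds are also structurally different: yours, $\varepsilon_c=\tfrac{m(n+1)}{n(m+1)}R^{m-n}$, is set by the cutoff radius, whereas the paper's is set by the smallest nonzero Fourier mode of the bounded domain. To close the gap you would need exactly the step you flag as missing: pass from the static energy to the induced evolution of the feature density (the free-particle/mean-field idealization the paper also adopts) and linearize about equilibrium --- at which point the Fourier dispersion argument is essentially forced, because the competition $1/|\bm{\xi}|$ versus $\varepsilon|\bm{\xi}|$ between the two kernels is what decides the sign of the growth rate mode by mode.
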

The details of the proof are provided in Appendix \ref{Appendix3}.

\begin{figure}[!hbtp]
    \centering
    \begin{subfigure}
    \centering
        \includegraphics[width=0.23\textwidth]{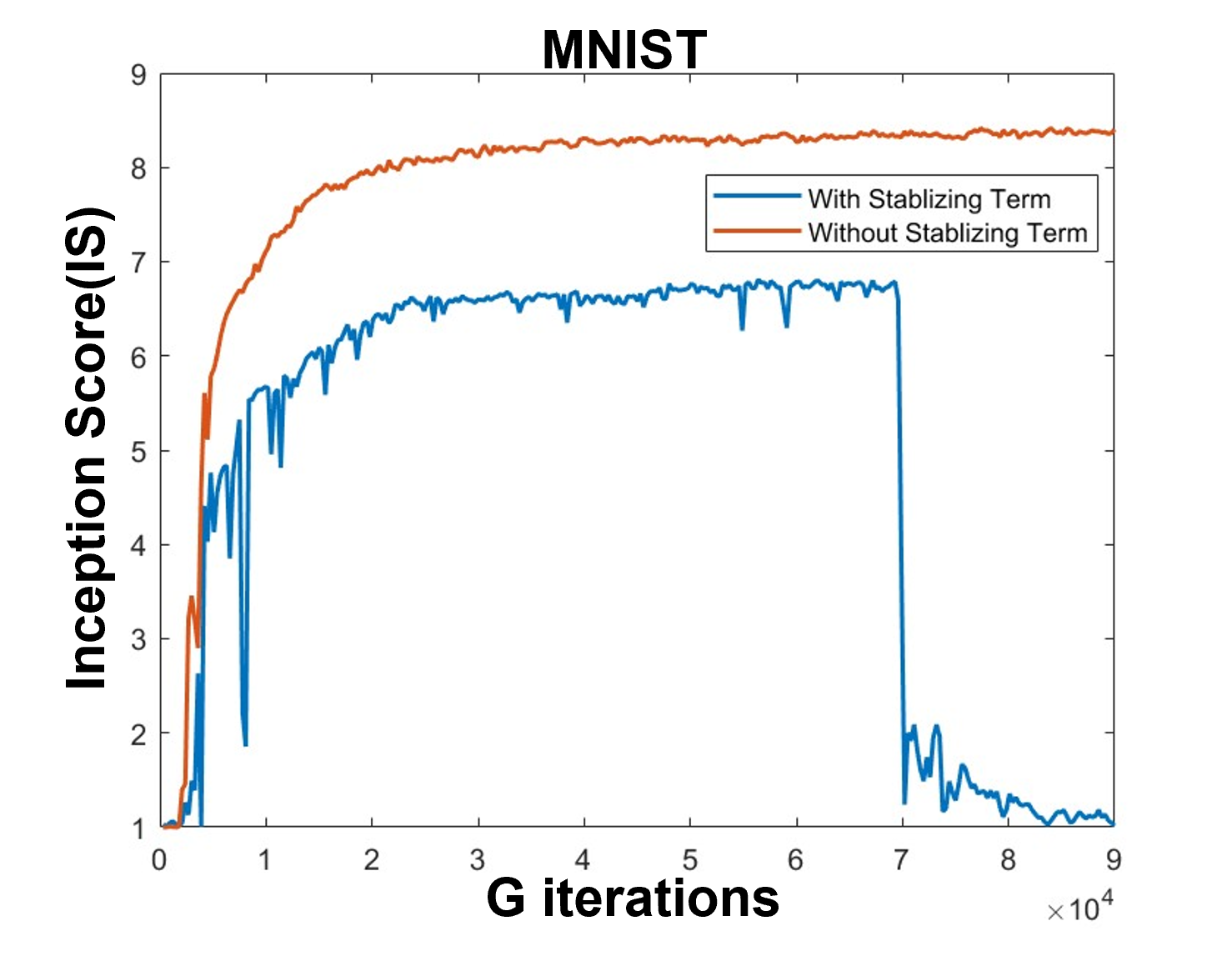}
    \end{subfigure}   
    \begin{subfigure}
    \centering
        \includegraphics[width=0.23\textwidth]{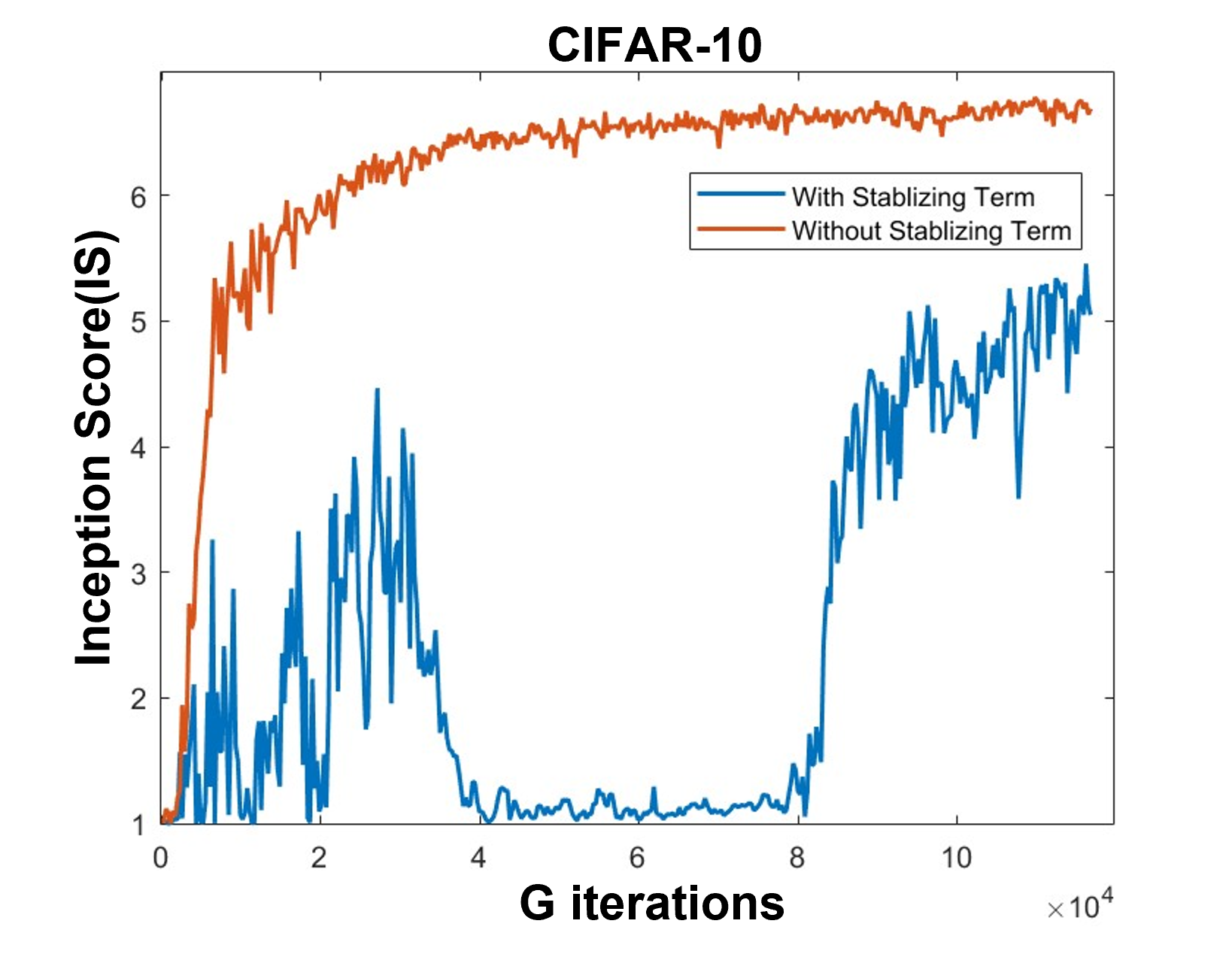}
    \end{subfigure}
    \caption{Training curves on MNIST and CIFAR-10 of EIEG GAN.}
    \label{Fig:EIEG:stability}
\end{figure}

\paragraph{(3) Stable training process} 
In addition to stabilizing the training process, the additional stabilizing term in $\mathcal{L}_{D}$ also has a mutual exclusion effect on the data in the feature space, which prevents mode collapse and improves the quality of image generation. This effect is due to the repulsive nature of the stabilizing term, which helps to maintain diversity among the generated samples by preventing them from collapsing onto a few dominant modes in the feature space. Therefore, the extra stabilizing term not only improves the stability of the training process but also enhances the quality of the generated images.

\begin{figure}[!hbtp]
    \centering
    \begin{subfigure}
        \centering
        \includegraphics[width=0.23\textwidth]{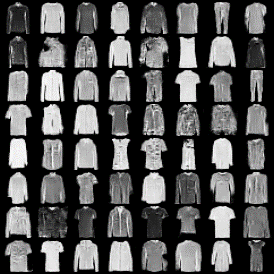}
    \end{subfigure}
    \begin{subfigure}
        \centering
        \includegraphics[width=0.23\textwidth]{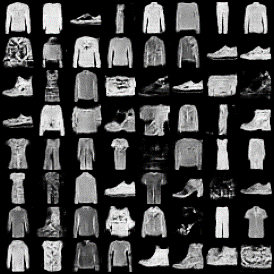}
    \end{subfigure}

    \begin{subfigure}
        \centering
        \includegraphics[width=0.23\textwidth]{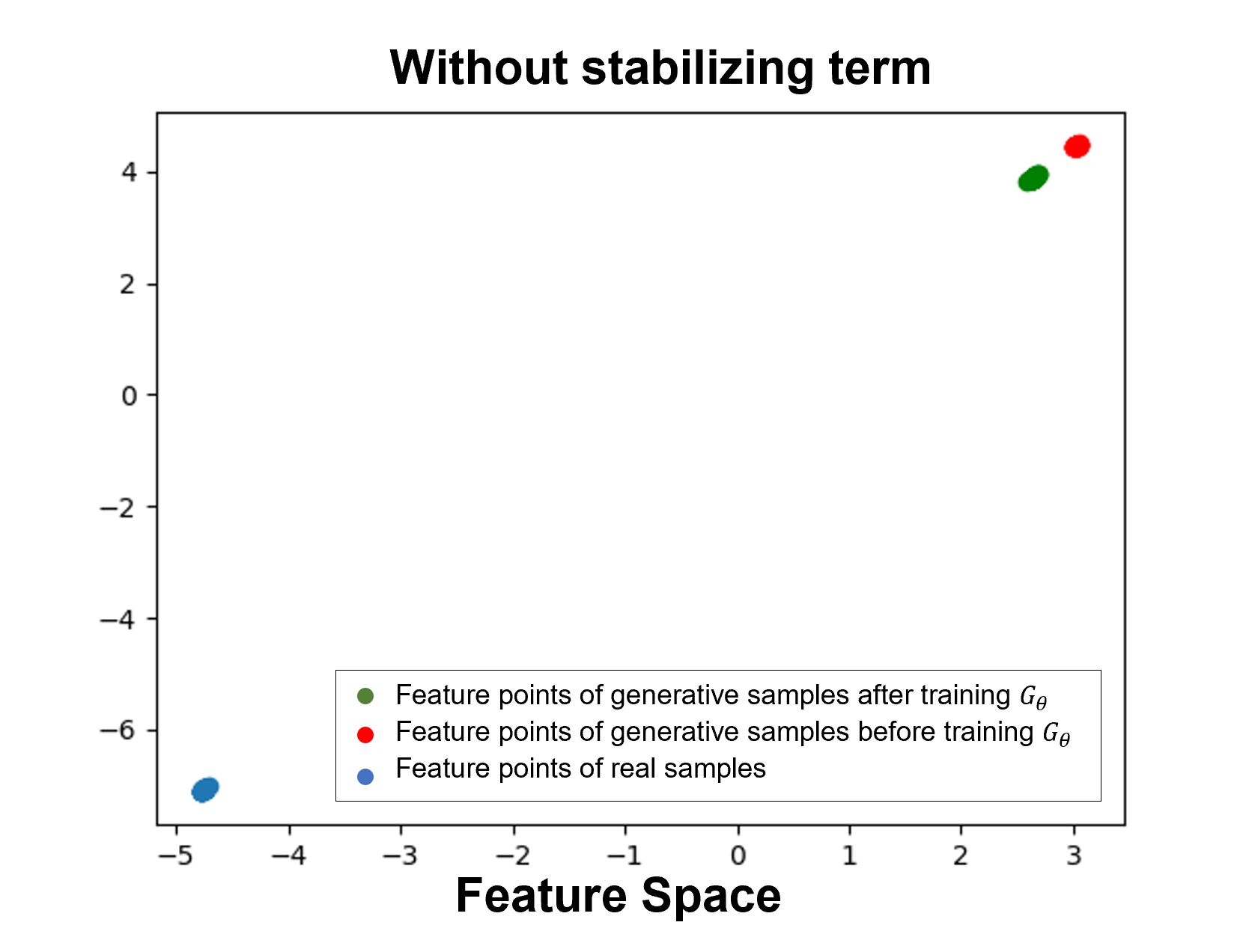}
    \end{subfigure}
    \begin{subfigure}
        \centering
        \includegraphics[width=0.23\textwidth]{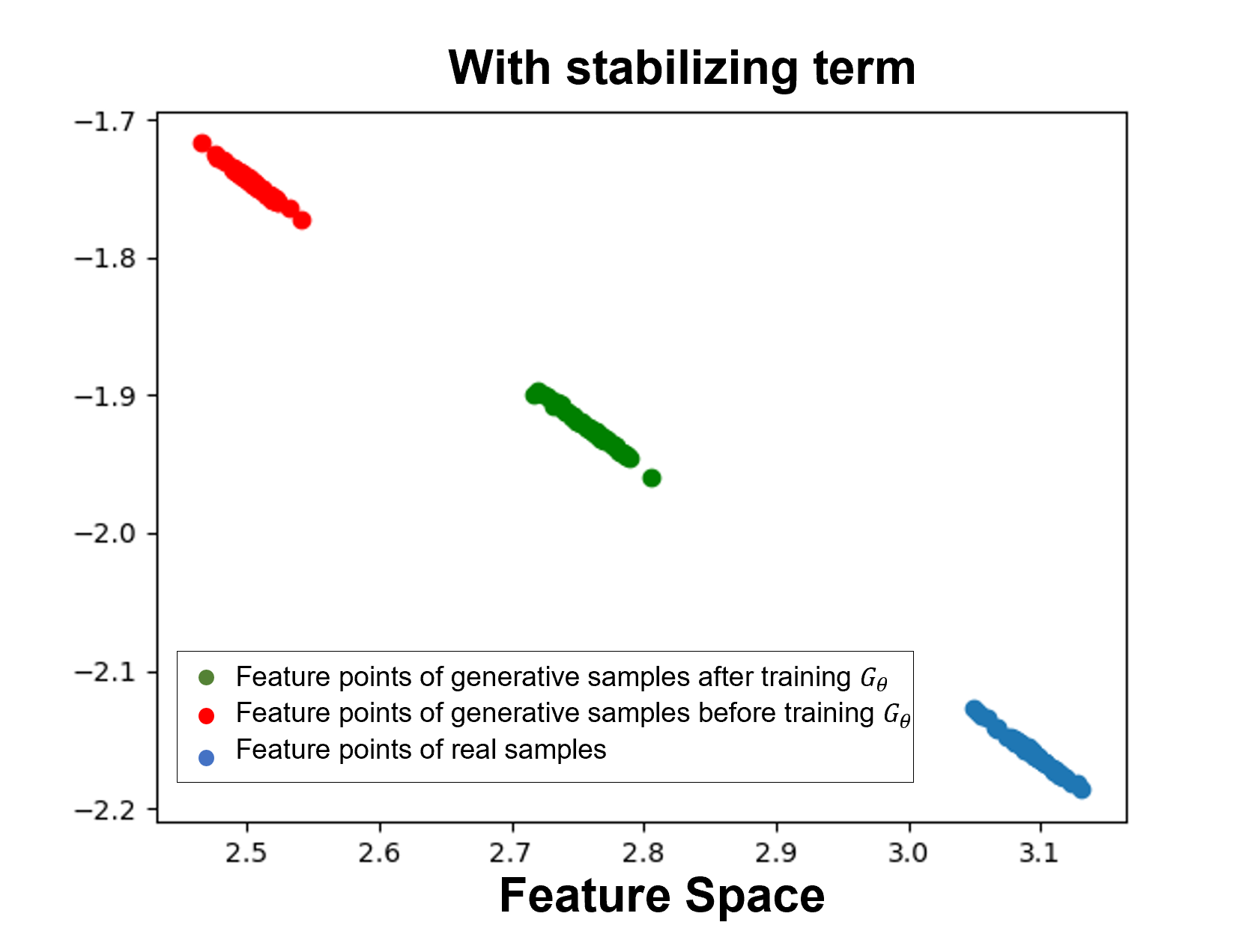}
    \end{subfigure}
    \caption{Generative samples on FashionMNIST and the corresponding 2-dimensional feature space.}
\label{Fig:extraterm feature}
\end{figure}

As illustrated in Fig.\ref{Fig:extraterm feature}, we conduct experiments on generating samples from the FashionMNIST dataset. We map the data onto a 2-dimensional feature space to provide a more intuitive visualization. In the left column, we train $D_{\phi}$ without the stabilizing term. Observing the collapsed data points in the feature space, we note that this results in the generated images exhibiting a lack of diversity and variation. On the other hand, the right column presents the case where $D_{\phi}$ is trained with $\mathcal{L_{D}}$. In this feature space, data distribution is denser, and no data collapse occurs, leading to a diversity of generative images.

\section{Comparisons with related works}
\paragraph{Generative adversarial network}
A variety of GANs can be developed by utilizing different probabilistic metrics to estimate the discrepancy between two distributions. 
For example, original GAN \cite{ref1} uses the JS divergence and
WGAN \cite{ref3} adopts the Wasserstein distance.
Nevertheless, due to the inherent min-max framework, GAN suffers from the drawback of unstable training \cite{refSNGAN,refUnstable}.

We have put forth a novel approach in which the metric between distributions is determined by the EIEG metric. 
By incorporating the feature transformation network $D_{\phi}$, our proposed EIEG GAN algorithm utilizes the EIEG metric to measure the distance between two distributions in the latent space. 
With an extra stabilizing term in the loss function of the transformation mapping, our proposed model exhibits enhanced stability in training compared to other GAN methods(see Fig.~\ref{Fig:stability}).

\paragraph{MMD-GAN}
Our model bears similarity to the loss function employed in MMD-GAN \cite{ref4,ref5}. Specifically, MMD-GAN utilizes the square of the Maximum Mean Discrepancy (MMD) distance, which is defined as the norm of the difference between the mean embeddings of two distributions in a reproducing kernel Hilbert space.
\begin{equation}
    \begin{aligned}
        & M_k(\mathbb{P}, \mathbb{Q}) =\left\|\mu_{\mathbb{P}}-\mu_{\mathbb{Q}}\right\|_{\mathcal{H}}^2 \\
        &=\mathbb{E}_{\mathbb{P}}\left[k\left(x, x^{\prime}\right)\right]-2 \mathbb{E}_{\mathbb{P}, \mathbb{Q}}[k(x, y)]+\mathbb{E}_{\mathbb{Q}}\left[k\left(y, y^{\prime}\right)\right].
    \end{aligned}
\end{equation}
In MMD-GAN, a Gaussian kernel  $k\left(x, x^{\prime}\right)=\exp \left(\left( - \|x-x^{\prime}\right\|^2\right)$ is used and the Gaussian kernel only consider local information of the distribution.

Our model can be regarded as using a different kernel defined as $k\left(x, x^{\prime}\right)=\frac{1}{\left|x-x^{\prime}\right|^{n-1}}$, where $n$ represents the dimension of $x$. This kernel is selected to align with the long-range properties of elastic interaction energy.
Moreover, our related loss function allows efficient optimization, contributing to improved performance compared to MMD-GAN. 
And with the stabilizing term in the loss function $\mathcal{L}_{D}$, our training process is much more stable.

\paragraph{Possion flow generative model}
Recently, the Diffusion model \cite{ref8,ref9,ref10,ref11,ref101} has gained considerable attention and has been extensively developed. These methods employ stochastic ordinary differential equations to generate samples that move dynamically toward areas where the target distribution is satisfied. 
To achieve this, neural networks are utilized to learn the force field in the dynamic system, for example, score-based generative models use neural network to learn the score (gradient of the log probability density of the target distribution). 
We have observed that the PFGM \cite{ref10} diffusion model uses electronic potential as the metric between the two distributions, similar to our proposed EIEG loss. The PFGM framework is based on the diffusion model and considers only the interaction term of the energy, which can lead to sample collapse. To overcome this issue, they map a uniform distribution on a high-dimensional hemisphere into the data distribution and follow the diffusion model's framework for sampling.

In our proposed model, we utilize self-potential energy, which causes an exclusive force between samples and results in generative samples that do not collapse, as illustrated in Fig.~\ref{Fig:collapse}. Our generative model is mainly based on the GAN framework. The GAN framework offers an advantage over the Diffusion model as it samples through neural networks, leading to faster generation speed and lower operational costs. Therefore, we propose the EIEG GAN model as a more effective and stable generative model.

\FloatBarrier

\section{Experiment}

\paragraph{Datasets}
We train EIEG GAN for image generation on MNIST \cite{ref15}, FashionMNIST \cite{ref20}, CIFAR-10 \cite{ref16} and CelebA \cite{ref17} datasets. MNIST and FashionMNIST have a set of 50k examples as $28 \times 28$ bilevel images. CIFAR-10 has a set of 50k examples as $32 \times 32$ color images. CelebA has a set of over 200k celebrity images as $160 \times 160$. All images are resized to $32 \times 32$ and rescaled so that pixel values are in $[0,1]$. 

\paragraph{Evaluation metrics}
Inception Score(IS)  \cite{ref21} was used for quantitative evaluation. Following\cite{ref23}, we trained a classifier on MNIST, FashionMNIST, and CIFAR-10 using a pre-activation ResNet-18\cite{ref24}.

\paragraph{Network architecture}
We use the neural network architecture of DCGAN \cite{ref18} to set its generator $G_{\theta}$ and  replace the output layer of the discriminator to $n$ dimensional space as our feature transformation network $D_{\phi}$.

\paragraph{Hyper-parameters}
We use Adam \cite{refadam} for generator with the learning rate of 0.0001 and feature mapping transformation with the learning rate of 0.00001. We set the output size of the feature transformation mapping network to $n = 2$. The batch size is set to be $B=64$ for all datasets.

\subsection{Results}
\paragraph{25-Gaussians Example} We conduct experiments on the 25 Gaussians\cite{ref8,ref13,ref14} generation task.The 25-Gaussians dataset is a 2D toy data generated by a mixture of 25 two-dimensional Gaussian distributions. We train GAN, WGAN-GP, and our EIEG GAN, whose networks in the models are parameterized by multilayer perceptions, with two hidden layers and LeakyReLu nonlinear.

The training results are shown in Fig.\ref{Fig:25-Gaussians}. The Vanilla GAN \cite{ref1} exhibits mode collapsing, and many generated samples are collapsing. For WGAN-GP \cite{ref2}, it performs even worse and fails to capture the Gaussian modes. However, our model EIEG GAN successfully captures all the 25 Gaussian modes and can approximate the distribution well.

\begin{figure}[!hbtp]
    \centering
    \begin{subfigure}
        \centering
        \includegraphics[width=0.23\textwidth]{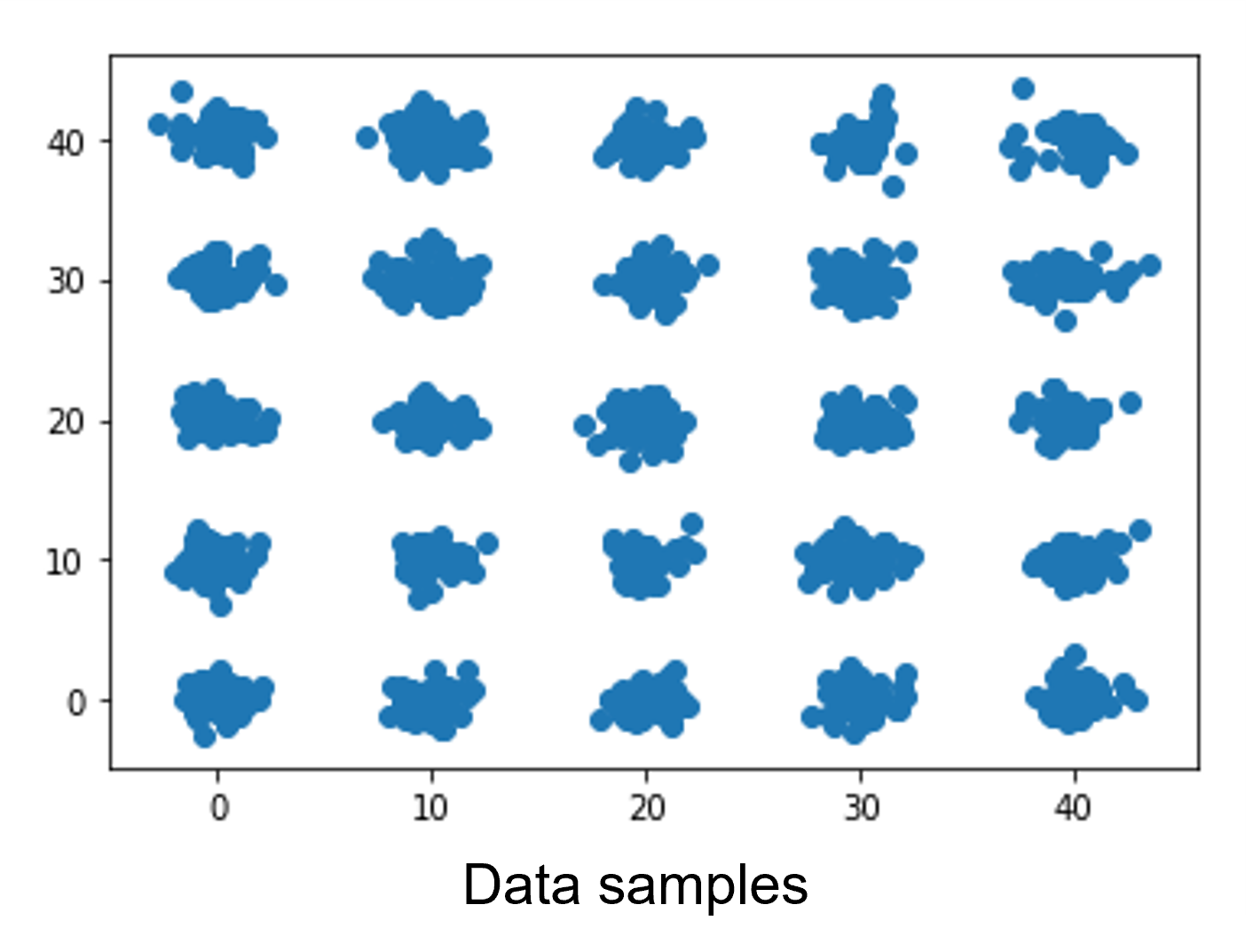}
    \end{subfigure}
    \begin{subfigure}
        \centering
        \includegraphics[width=0.23\textwidth]{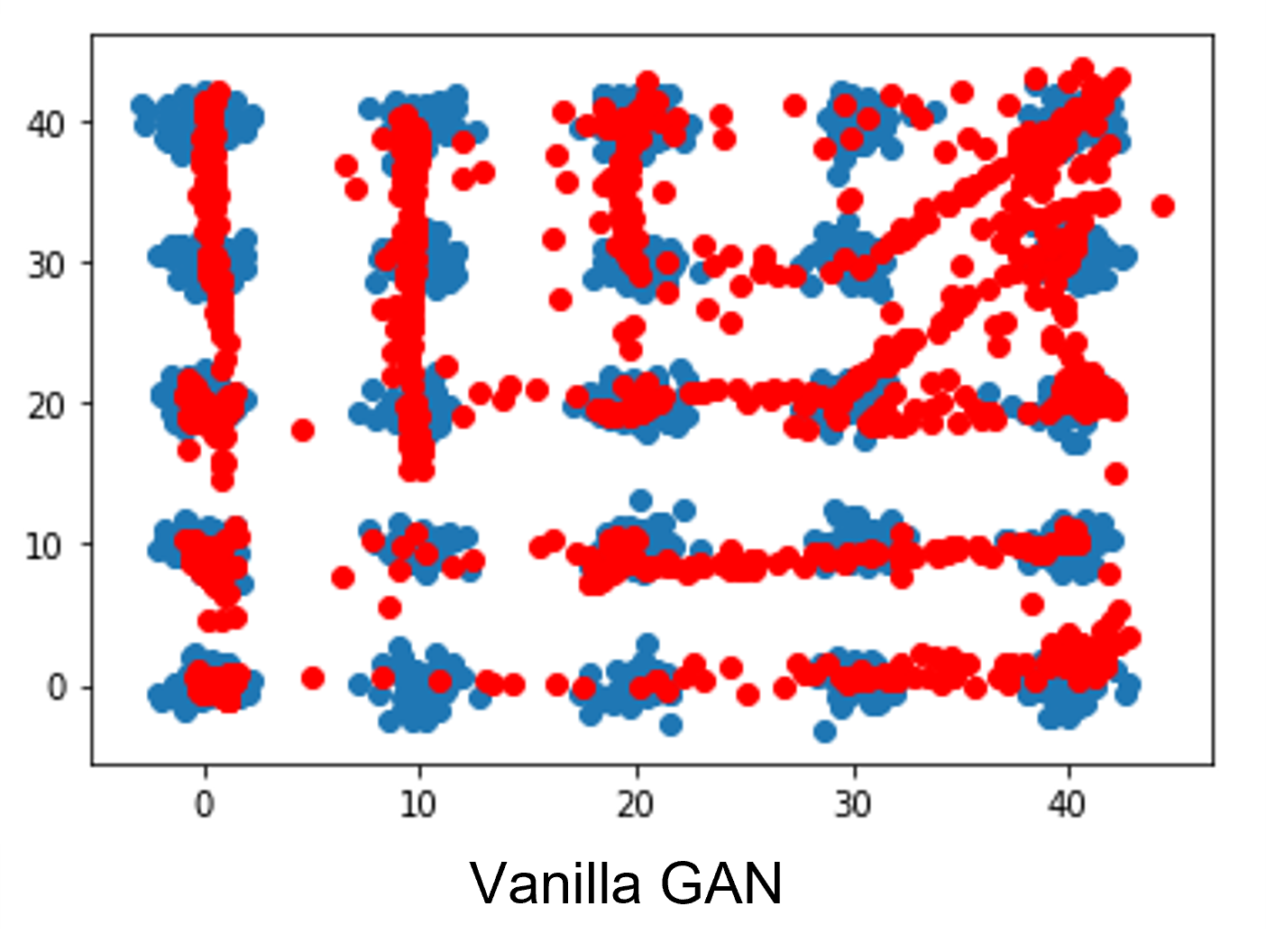}
    \end{subfigure}

    \begin{subfigure}
        \centering
        \includegraphics[width=0.23\textwidth]{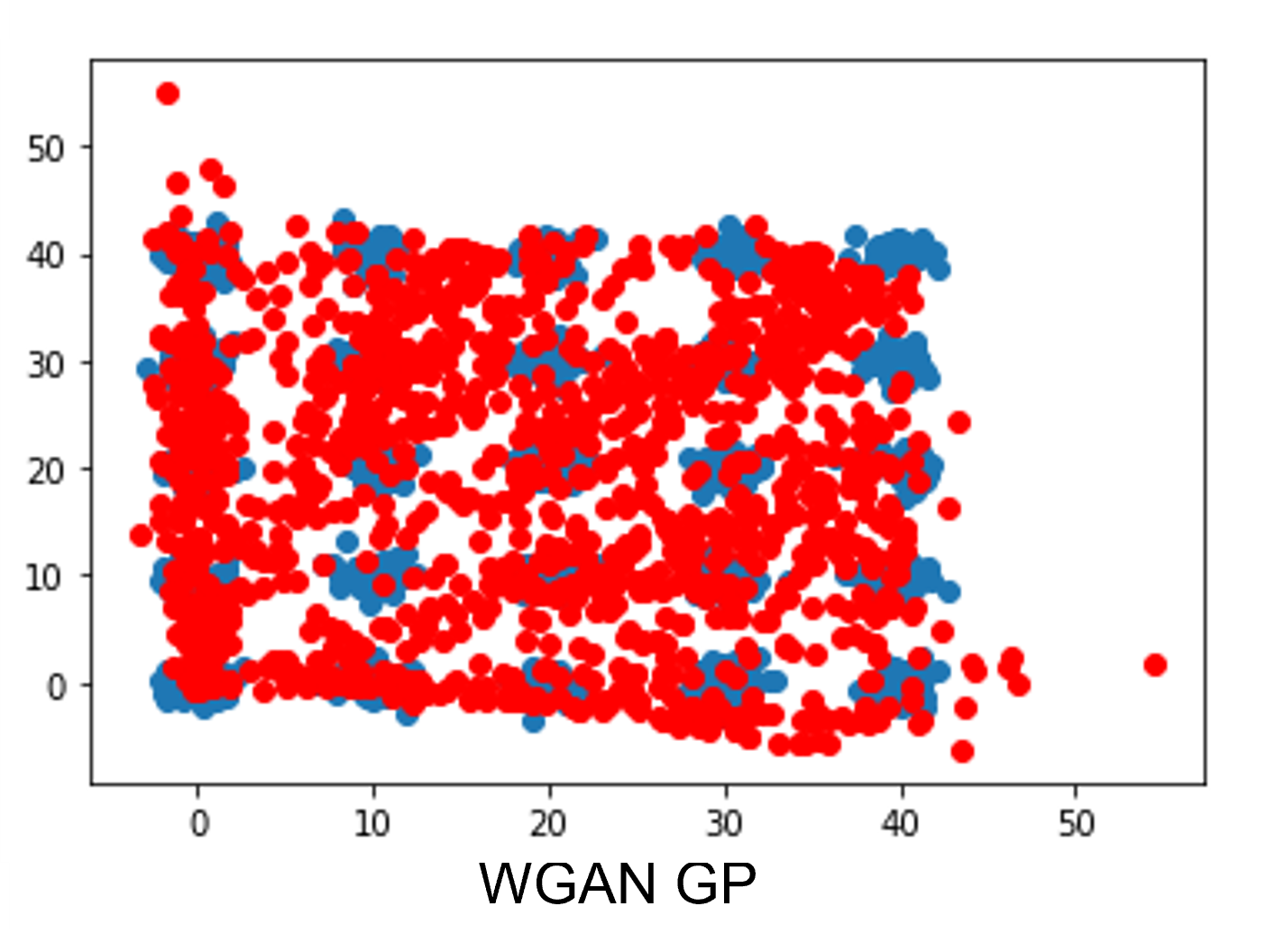}
    \end{subfigure}
    \begin{subfigure}
        \centering
        \includegraphics[width=0.23\textwidth]{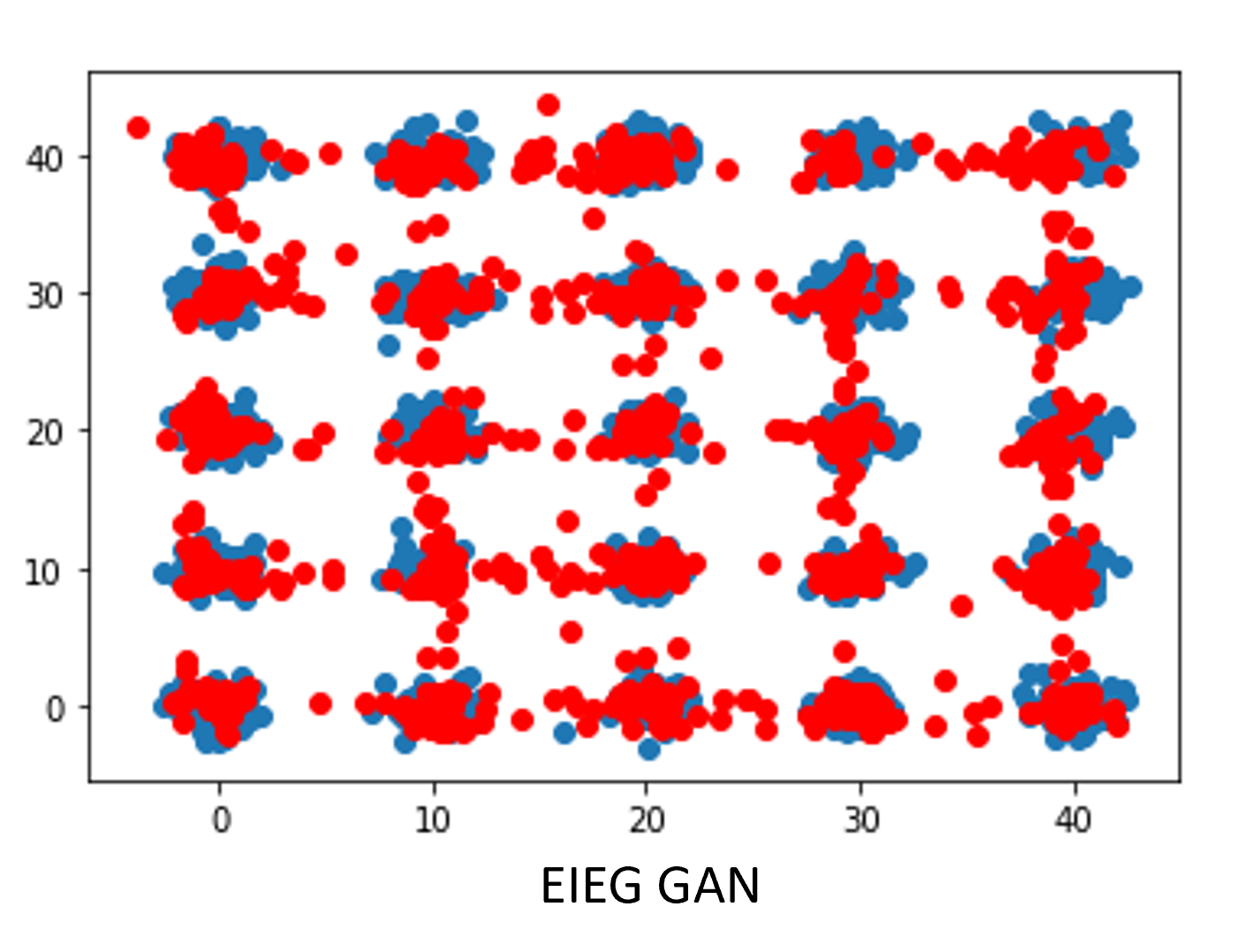}
    \end{subfigure}
    \caption{The 25-Gaussians example. We show the true data samples,the generated samples from vanilla GANs,the generated samples from WGAN-GP, and the generated samples from our EIEG GAN.}
\label{Fig:25-Gaussians}
\end{figure}

\paragraph{Image generation} In Fig.\ref{Fig:Image_generation}, we show uncurated samples generated from our EIEG GAN for MNIST, FashionMNIST, CIFAR-10, and CelebA. Here, we only use the most basic DCGAN network structure without any technique treatments. As shown by the Table \ref{Table: IS Score}, our generated images have higher or comparable quality to those standard GAN-based models\cite{ref1,ref18,ref3,ref2,ref4}. We provide additional details on model architecture and settings in Appendix \ref{Appendix4}.

\begin{figure}[!hbtp]
    \centering
    \begin{subfigure}
        \centering
        \includegraphics[width=0.23\textwidth]{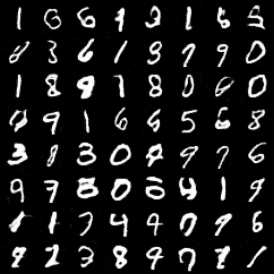}
    \end{subfigure}
    \begin{subfigure}
        \centering
        \includegraphics[width=0.23\textwidth]{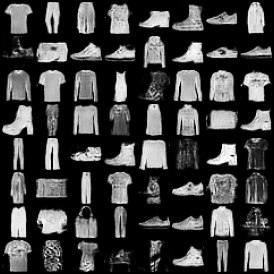}
    \end{subfigure}

    \begin{subfigure}
        \centering
        \includegraphics[width=0.23\textwidth]{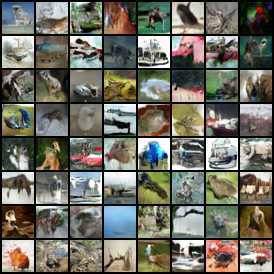}
    \end{subfigure}
    \begin{subfigure}
        \centering
        \includegraphics[width=0.23\textwidth]{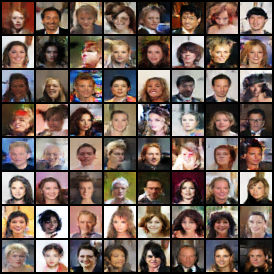}
    \end{subfigure}
    \caption{Uncurated samples on MNIST, FASHIONMNIST, CIFAR-10, and CelebA datasets. }
\label{Fig:Image_generation}
\end{figure}

\paragraph{Quantitative analysis}
To assess the quality and diversity of the generated samples, we quantify the inception score on CIFAR-10 images. To this end, we employ the basic network architecture of DCGAN without any additional training techniques. We compare our model, EIEG-GAN, with several representative extensions of GANs. The results demonstrate that EIEG GAN performs comparably to other representative GANs. Specifically, Table \ref{Table: IS Score} presents the inception scores for 50K generated samples by different representative GAN extensions trained on FashionMNIST and CIFAR-10 datasets.

\begin{table}[!hbtp]
\begin{center}
\begin{tabular}{ccc}
\hline
Method& FashionMNIST& CIFAR-10\\
\hline
PixelCNN \cite{refpixelCNN} &- &4.60\\
WGAN \cite{ref4}& 7.78&5.88\\
WGAN-GP \cite{refSNGAN}&7.97&6.68\\
DCGAN \cite{ref18}& 8.05&6.16\\
MMD GAN \cite{ref4}&- &6.17\\
EIEG GAN-2& 8.44&6.75\\
EIEG GAN-16& 8.54&\textbf{7.02}\\
EIEG GAN-32& \textbf{8.75}&-\\
\hline
\end{tabular}
\end{center}
\caption{Inception scores $\uparrow$ for FashionMNIST and CIFAR-10. And EIEG GAN-$n$ 
indicates that the size of the feature space is $n$.}
\label{Table: IS Score}
\end{table}

\FloatBarrier

\FloatBarrier
\subsection{Stability and effectiveness}
Our EIEG GAN model offers a significant advantage in terms of stability during training, which is attributed to the loss function of our feature transformation network $D_{\phi}$. To support this claim, we present a comparative analysis of the training progress of our model with that of other GAN models on FashionMNIST and CIFAR-10 datasets in Fig.\ref{Fig:stability}. As shown in the figure, the learning curves of EIEG GAN are notably smoother, indicating greater training stability. The inception scores steadily increase until 10k iterations (approximately 10 epochs) on FashionMNIST and 40k iterations (nearly 30 epochs) on CIFAR-10, indicating that our model converges rapidly. On the other hand, the training of DCGAN on FashionMNIST is found to be highly unstable, while WGAN shows instability in the training process on both datasets. WGAN-GP takes longer to converge on FashionMNIST, while its training on CIFAR-10 appears to be unstable. Overall, our model outperforms these representative GAN models in terms of the quality of the generated images and training stability and effectiveness on these two datasets. 

\begin{figure}[!hbtp]
    \centering
    \begin{subfigure}
    \centering
        \includegraphics[width=0.5\textwidth]{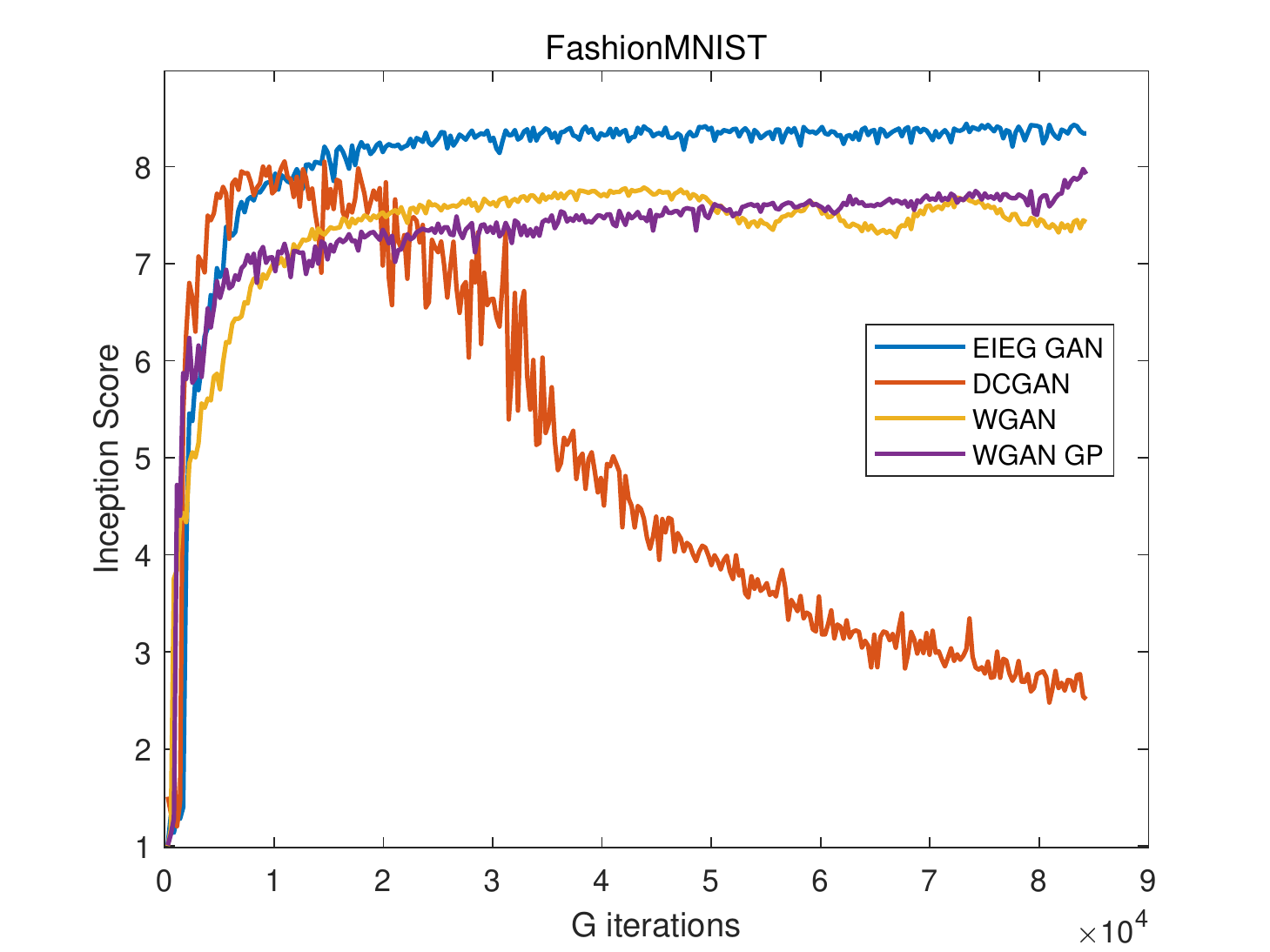}
    \end{subfigure}   
    \begin{subfigure} 
    \centering
        \includegraphics[width=0.5\textwidth]{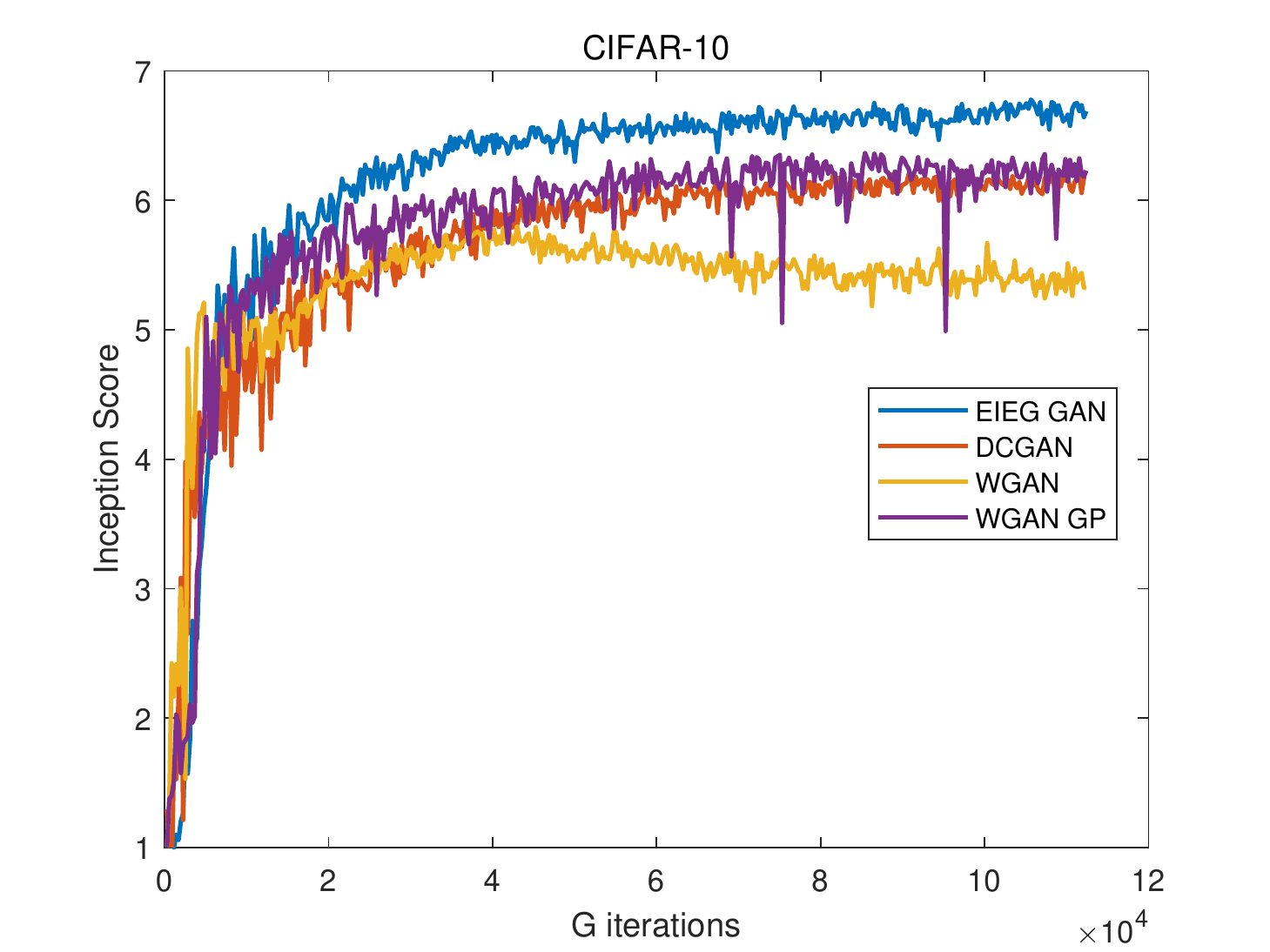}
    \end{subfigure}
    \caption{Traning Curves on FashionMNIST and CIFAR10 of EIEG GAN and different GAN-based models.}
    \label{Fig:stability}
\end{figure}

\subsection{Feature space effect}
Furthermore, as shown in Table \ref{Table: IS Score} our model's performance improves with an increase in the dimension of the feature space. We provide the experimental results in Appendix \ref{Appendix5_2}. This is likely because higher-dimensional feature spaces can capture more complex and subtle patterns in the data, which can lead to more realistic and varied generated samples. Specifically, our experiments demonstrate that EIEG GAN generates more diverse and realistic samples than the other GAN-based models, while also exhibiting improved training stability. 
 Higher generation quality is expected with increasing the dimension of the feature space. These findings highlight the superior performance of EIEG GAN and underscore its potential to advance the state-of-the-art in generative modeling.


\section{Conclusions}

We present a novel approach, namely EIEG GAN, for generative modeling which leverages the elastic interaction energy (EIE) to capture global distribution information and promote diverse sample generation, while mitigating the problem of mode collapse. One of the key innovations of our approach is the usage of a feature transformation network as a discriminator, which maps high-dimensional data into a lower dimensional feature space. Additionally, we introduce a stabilizing term to the loss function of the feature transformation network to enhance the stability of GAN-based training. Our experimental results demonstrate that the proposed EIEG GAN approach surpasses several standard GAN-based models with respect to both sample diversity and training stability. These findings highlight the potential of the EIEG GAN approach to advance the state-of-the-art in GAN-based generative modeling.

\section*{Acknowledgements}

This work was supported by the Project of Hetao Shenzhen-HKUST Innovation Cooperation Zone HZQB-KCZYB-2020083.

{\small
\bibliographystyle{ieeetr}
\bibliography{egpaper_for_review}
}

\newpage

\begin{appendices}
\noindent \title{\large\textbf{Appendix}}
\section{EIEG Metric} \label{Appendix1}

Consider two probability density functions $p(\mathbf{x}), q(\mathbf{x}) : \mathbb{R}^{n} \rightarrow \mathbb{R}$, the EIEG metric between these two probability density function is
\begin{equation}
 \begin{aligned}
&E[p(\mathbf{x}),q(\mathbf{x}) ] \\&=\int_{\mathbb{R}^{n}}  \int_{\mathbb{R}^{n}}(p(\mathbf{x}) - q(\mathbf{x})) \cdot \frac{(p(\mathbf{y}) - q(\mathbf{y}))}{r^{n-1}} d \Omega_\mathbf{x} d \Omega_\mathbf{y},
 \end{aligned}
 \label{Eqn:EIEG metric}
\end{equation}
where $r = \| \mathbf{x} - \mathbf{y}\|$.

\begin{proposition}[in main paper]

(i) Given two probability distribution $\mathbb{P}$ and $\mathbb{Q}$, we have $E[\mathbb{P},\mathbb{Q}] \geq 0$ and $E[\mathbb{P},\mathbb{Q}] = 0 \Leftrightarrow \mathbb{P} = \mathbb{Q}$.

 (ii) Let $\left\{\mathbb{P}_n\right\}$ be a sequence of distributions, and $\mathbb{P}_{n}$ is the corresponding probability density function. Considering $n \rightarrow \infty$, $E[\mathbb{P}_n,\mathbb{P}_{data}] \rightarrow 0 \Longleftrightarrow \|\mathbb{P}_n - \mathbb{P}_{data}\|_{\text{semi-}{H^{-\frac{1}{2}}}} \rightarrow 0$.

\end{proposition}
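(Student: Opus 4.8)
The plan is to diagonalize the quadratic form $E[\cdot,\cdot]$ with the Fourier transform, exhibiting it as a weighted $L^2$ norm of $\widehat{p-q}$ against a strictly positive weight; once this is done, both parts drop out. Write $f = p - q$. Since $p$ and $q$ are probability densities, $\int_{\mathbb{R}^n} f\, d\Omega = 0$, equivalently $\hat f(0) = 0$. Put $K(\mathbf{x}) = \|\mathbf{x}\|^{-(n-1)}$, which is locally integrable on $\mathbb{R}^n$ for $n \ge 2$ (the case $n = 1$ being degenerate, since then $E$ vanishes identically) and hence a tempered distribution; the metric in \eqref{Eqn:EIEG metric} reads $E[p,q] = \int_{\mathbb{R}^n} f(\mathbf{x})\,(K * f)(\mathbf{x})\, d\Omega_{\mathbf{x}}$.

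First I would invoke the classical Riesz-potential identity: for $0 < s < n$ the Fourier transform of $\|\mathbf{x}\|^{-s}$ on $\mathbb{R}^n$ equals $\gamma_{n,s}\,\|\xi\|^{-(n-s)}$ with $\gamma_{n,s} = \pi^{n/2-s}\,\Gamma\!\big(\tfrac{n-s}{2}\big)/\Gamma\!\big(\tfrac{s}{2}\big) > 0$; taking $s = n-1$ gives $\widehat{K}(\xi) = c_n\|\xi\|^{-1}$ with $c_n > 0$. Then, using $\widehat{K*f} = \widehat{K}\,\hat f$ and $\overline{\hat f(\xi)} = \hat f(-\xi)$ (as $f$ is real), Plancherel yields
\[
E[p,q] \;=\; \int_{\mathbb{R}^n} \hat f(\xi)\,\overline{\widehat{K*f}(\xi)}\, d\xi \;=\; c_n \int_{\mathbb{R}^n} \frac{|\hat f(\xi)|^2}{\|\xi\|}\, d\xi ,
\]
the integral being convergent because $n \ge 2$ makes $\|\xi\|^{-1}$ locally integrable near $0$ (where $\hat f$ is bounded) and $\hat f$ decays at infinity.

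Part (i) is then immediate: the integrand is pointwise nonnegative, so $E[p,q] \ge 0$, and $E[p,q] = 0$ forces $\hat f \equiv 0$, hence $f \equiv 0$, i.e.\ $\mathbb{P} = \mathbb{Q}$; the reverse implication is trivial. For (ii), observe that the right-hand side above is exactly $c_n$ times $\|f\|_{\text{semi-}{H^{-\frac12}}}^2 := \int_{\mathbb{R}^n}\|\xi\|^{-1}|\hat f(\xi)|^2\, d\xi$, the squared homogeneous $H^{-1/2}$ seminorm, which is finite and point-separating on mean-zero functions. Hence $E[\mathbb{P}_n,\mathbb{P}_{data}] = c_n\|\mathbb{P}_n - \mathbb{P}_{data}\|_{\text{semi-}{H^{-\frac12}}}^2$, so the two convergences in (ii) are equivalent.

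The main obstacle is the analytic justification of the Fourier computation: $K$ has a singularity at the origin and only slow decay at infinity, so the identities $\widehat{K} = c_n\|\xi\|^{-1}$, $\widehat{K*f} = \widehat K\,\hat f$, and the application of Plancherel all need care — most cleanly handled by first assuming $p,q$ smooth and rapidly decreasing (so $f \in \mathcal{S}$ with $\hat f(0)=0$) and then extending by a density/approximation argument, or alternatively by truncating $K$ near $0$ and $\infty$ and passing to the limit. The only genuinely computational point is confirming the sign $c_n > 0$ from the Riesz formula, and the entire positivity claim rests on that single fact; everything else is routine.
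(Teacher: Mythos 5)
Your proof is correct and follows essentially the same route as the paper's: both diagonalize the quadratic form via the Fourier transform, use the Riesz-potential identity $\mathcal{F}(\|\mathbf{x}\|^{-(n-1)}) = c_n\|\xi\|^{-1}$ together with Plancherel/Parseval to write $E[p,q]$ as a positive multiple of the squared semi-$H^{-1/2}$ norm of $p-q$, and read off both (i) and (ii) from that identity. Your version is in fact somewhat more careful than the paper's (explicit positive constant, attention to integrability of the kernel and to the density argument needed to justify the Fourier manipulations), but the underlying idea is identical.
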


\begin{proof}
For the convenience of illustration, we prove the two-dimensional example here, and the proof of the high-dimensional case is similar. 

Fourier transform for a function $f(x,y)$ in two dimensional space is
\begin{equation}
\mathcal{F}(f(x,y))(m, n)=\int_{-\infty}^{\infty} \int_{-\infty}^{\infty} f(x, y) e^{-i 2 \pi(m x+n y)} d x d y,
\end{equation}
where $m,n$ are frequencies in the Fourier space. The convolution between two functions $f(\mathbf{x})$ and $g(\mathbf{x})$ is
\begin{equation}
(f * g)(\bm{t}):=\int_{\mathbb{R}^2} f(\bm{\tau}) g(\mathbf{t}-\bm{\tau}) d \bm{\tau}.
\end{equation}

For a probability density function $p(\mathbf{x}): \mathbb{R}^{2} \rightarrow \mathbb{R}$, consider the energy
\begin{equation}
\begin{aligned}
E[p(\mathbf{x})] =&\int_{\mathbb{R}^{2}}  \int_{\mathbb{R}^{2}}p(\mathbf{x}) \cdot \frac{p(\mathbf{y}) }{r} d \Omega_\mathbf{x} d \Omega_\mathbf{y} \\
=&\int_{\mathbb{R}^{2}} p(\mathbf{x}) (p * \frac{1}{r})(\mathbf{x}) d\Omega_\mathbf{x}.
\end{aligned}
\end{equation}

Using Parseval's identity, we have
\begin{equation}
\begin{aligned}
    E[p(\mathbf{x})] =&\int_{\mathbb{R}^{2}} \overline{\mathcal{F}(p(\mathbf{x}))} \mathcal{F}((p * \frac{1}{r})(\mathbf{x}))d\Omega_\mathbf{x}\\
    =&\sum_{|\bm{\xi}|} \overline{\hat{p}(\bm{\xi})}\hat{p}(\bm{\xi})\mathcal{F}(\frac{1}{r})\\
    =& \sum_{|\bm{\xi}|}  \frac{1}{|\bm{\xi}|}\overline{\hat{p}(\bm{\xi})}\hat{p}(\bm{\xi})\\
    =& \sum_{m,n}\frac{1}{\sqrt{m^2 + n^2}}|\hat{p}(m,n)|^2,
\end{aligned}
\end{equation}
hwhere $\hat{p}(\mathbf{\xi})$ is the Fourier transform of $p(\mathbf{x})$, $\mathbf{\xi} = (m,n)$ is the Fourier coefficients. This is a semi-$H^{-\frac{1}{2}}$ norm.

Thus for the EIEG metric we have 
\begin{equation}
\begin{aligned}
    E[p(\mathbf{x}),q(\mathbf{x}) ] \triangleq& E[p(\mathbf{x})-q(\mathbf{x}) ] 
    \\=& \sum_{m,n}\frac{1}{\sqrt{m^2 + n^2}}|\hat{p} - \hat{q}|^2
\end{aligned}
\end{equation}
which is easy to see that for any two smooth distribution density functions $p(\mathbf{x})$ and $q(\mathbf{x})$
\begin{equation}
  E[p,q] \geq 0, and \quad E[p,q] = 0 \Leftrightarrow p = q.
\end{equation}
Moreover, for a sequence of smooth distributions $p_{n}$,
\begin{equation}
\begin{aligned}
    &\lim_{n \rightarrow \infty} E[p_{n}(\mathbf{x}),p(\mathbf{x}) ] \rightarrow 0 \\ \Longleftrightarrow & \lim_{n \rightarrow \infty} \|p_n - p \|^{2}_{semi-H^{-\frac{1}{2}}} \rightarrow 0 .
\end{aligned}
\end{equation}
\end{proof}

\paragraph{Strong long-range interaction.} 
From Eqn.(\ref{Eqn:EIEG metric}) we can let $p(\mathbf{x}) = \delta_{\mathbf{x_0}}(\mathbf{x})$ and $q(\mathbf{x}) = \delta_{\mathbf{y_0}}(\mathbf{x})$, and we can get the elastic interaction energy between two sample points $\mathbf{x_0}$ and $\mathbf{y_0}$,
\begin{equation}
E_{int}[\delta_{\mathbf{x_0}}(\mathbf{x}),\delta_{\mathbf{y_0}}(\mathbf{x}) ] =\\
-\frac{2}{\|\mathbf{x_0} - \mathbf{y_0}\|^{n-1}}.
\label{Eqn:EIEG point metric int}
\end{equation}
As noted in the main paper, the elastic interaction energy (EIE) between two sample points exhibit long-range behavior, with the energy being inversely proportional to the $(n-1)$th powers of the distance between them. As the distance approaches infinity, this decay is very slow. The attractive force between two sample points is the negative gradient of the energy $\nabla_{\mathbf{x_0}}E[\delta_{\mathbf{x_0}}(\mathbf{x}),\delta_{\mathbf{y_0}}(\mathbf{x})]$. It can be observed that the interaction force between two sample points exhibits the asymptotic property 
\begin{equation}
    \mathbf{f} \propto \frac{1}{r^n},
\end{equation} 
where $r$ is the distance from the samples. Consequently, elastic interaction results in a strong attractive force between samples from the data distribution $\mathbb{P}_{data}$ and the generated distribution $\mathbb{P}_{\theta}$, as the generative samples gradually approach the data samples.

In Eqn.(\ref{Eqn:EIEG point metric int}), we only consider the interaction term between $\mathbf{x_0}$ and $\mathbf{y_0}$. The self-interaction terms of $\mathbf{x_0}$ and $\mathbf{y_0}$ are singularities. To address this issue, we set a cut-off in the EIEG loss in the main paper section 2.

\section{Scattered distribution in high-dimensional space and its consequence.} \label{Appendix2}
The findings presented in Figure 5 of the main paper suggest that using only the generator network $G_{\theta}$ under the EIEG loss is insufficient to generate high-quality samples directly from high-dimensional datasets. This is primarily because the distribution of data points in the high-dimensional data space is scattered.

In this section, we empirically demonstrate that the distribution of high-dimensional datasets exhibits a scattered structure. We perform sampling in the data space through an ordinary differential equation (ODE).

Consider the EIEG metric between the data distribution $\mathbb{P}_{data}$ and generated distribution $\mathbb{P}_{\theta}$. In order to minimize the EIEG metric between them $\min E[P_{data},P_{\theta}]$ ($P_{data}$ and $P_{\theta}$ stands for the corresponding distribution density function), we can have a Wasserstein gradient flow for the probability density function $P_{\theta}$ \cite{refwflow}:

\begin{equation}
\begin{aligned}
      \frac{\partial P_{\theta} }{\partial t} =& \nabla \cdot\left(P_{\theta} \nabla \frac{\delta E}{\delta P_{\theta}}\right) \\
      =& \nabla \cdot\bigg(P_{\theta} \nabla \bigg(2\mathbb{E}_{y \sim \mathbb{P}_{\theta}}(\frac{1}{r^{n-1}}) - 2\mathbb{E}_{y \sim \mathbb{P}_{data}}(\frac{1}{r^{n-1}})\bigg)\bigg).
\end{aligned}
\end{equation}

By the Fokker Planck Equation \cite{reffokker} we can get the corresponding evolution equation for the generative samples is:
\begin{equation}
    dX_{t} = \mathbb{E}_{Y \sim \mathbb{P}_{data}}\frac{\vec{Y}-\vec{X}}{r^{n+1}}dt - \mathbb{E}_{Y \sim \mathbb{P}_{\theta}}\frac{\vec{Y}-\vec{X}}{r^{n+1}}dt ,
\label{eq:dynamic}
\end{equation}
where the first term stands for the attractive effect between the data samples and generated samples and the second term stands for the repulsive effect between the generated samples.

Thus using the cut-off as in the main paper, we can get the final evolution dynamics for a generated sample $X_{i}$
\begin{equation}
    dX_{i,t} =\left(
     M_1 \frac{1}{N_1}\sum_{j=1}^{N_1}{\overrightarrow{f_{ij,int}}} - M_2\frac{1}{N_2}\sum_{j=1}^{N_2} \overrightarrow{f_{ij,self}}
    \right) dt
\label{Eqn: sampling}
\end{equation}
where
\begin{equation}
    {\overrightarrow{f_{i j}}}=\begin{cases}
\frac{\vec{Y}_{j}-\vec{X}_{i}}{r_{i j}^{n+1}} \quad &r_{i j} \geqslant R \\
\\\frac{\left(\vec{Y}_{j}-\vec{X}_{i}\right)}{R^{n+1}}  \quad &r_{i j}<R
\end{cases}.
\end{equation}
Here $\overrightarrow{f_{ij,int}}$, samples $\vec{Y}_{j} \sim \mathbb{P}_{data}$ and in $\overrightarrow{f_{ij,self}}$, samples $\vec{Y}_{j} \sim \mathbb{P}_{\theta}$, $M_1$ and $M_2$ are mobilities of the dynamics and $N_1$ and $N_2$ are numbers of the samples.

\paragraph{Experiments.} We use Eqn.(\ref{Eqn: sampling}) to perform the generative sampling directly. The initial state for the generative sample is $X_{i,0} \sim \mathcal{N}(0,I)$. We test on MNIST \cite{ref15}, CIFAR-10 \cite{ref16}, CelebA \cite{ref17}. The hyperparameters setting can be found in Table\ref{Table: Hyperpara_ODE}. Results can be found on Fig.\ref{Fig: GEN_ODE}.

\begin{table}[!hbtp]
\begin{center}
\begin{tabular}{cc}
\hline
Hyperparameters& settings\\
\hline
Cut-off $R$  & 1\\
Mobility $M_1$ & 100 \\
Mobility $M_2$ & 50\\
Step size $\Delta t$ & 0.1\\
Batch size $N_1$ & 64\\
Batch size $N_2$& 64\\
Total steps $T$ & 100000\\
\hline
\end{tabular}
\end{center}
\caption{Hyperparameters for directly using ODE Eqn.(\ref{Eqn: sampling}) to generate samples.}
\label{Table: Hyperpara_ODE}
\end{table}

As illustrated in Fig.\ref{Fig: GEN_ODE}, most of the generated samples exhibit blurring. Our experiments have been executed for a sufficiently long duration, and the dynamic system has achieved a state of equilibrium. We can exclude the possibility of blurred results stemming from the lack of convergence. Furthermore, we observe a few clear samples which can be found in the original dataset. We believe that the reason behind these phenomena can all be attributed to the scattered distribution of data in the high-dimensional data space. Specifically, for these few clear generated samples, we attribute their clarity to the fact that they were directly attracted to a specific data sample, while other data samples that are far away from have only very small attractive force to these few clear generated samples. For the majority of the blurry generated samples, the scattered distribution of data in high-dimensional space prevented them from being attracted to a specific mode of distribution, which may lead to the vague phenomenon.

\textbf{Remark:} To address the issue that data distribution is scattered in high-dimensional data space, as presented in the main paper, we propose a feature transformation network to map the high-dimensional data to a feature space with a more compact and smooth distribution. This transformation enables our EIEG model to more accurately approximate the underlying distribution in feature space.

\begin{figure}[!hbtp]
    \centering
    \begin{subfigure}
    \centering
        \includegraphics[width=0.23\textwidth]{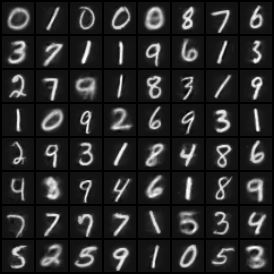}
    \end{subfigure}   
    \begin{subfigure} 
    \centering
        \includegraphics[width=0.23\textwidth]{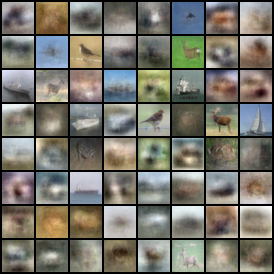}
    \end{subfigure}
    \begin{subfigure} 
    \centering
        \includegraphics[width=0.23\textwidth]{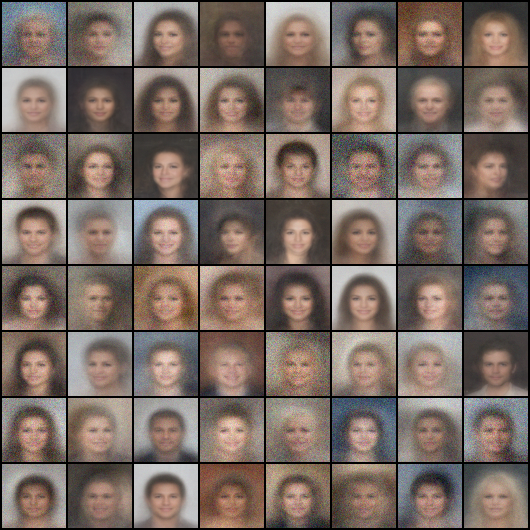}
    \end{subfigure}
    \caption{Generated samples on MNIST, CIFAR10 and CelebA.}
    \label{Fig: GEN_ODE}
\end{figure}  

\section{Stability analysis} \label{Appendix3}
One of our contributions presented in the main paper is that we add a stabilizing term in the loss function of the elastic discriminator $\mathcal{L}_D$ (Eqn.(8) in the main paper). This property is summarized in \textbf{Proposition 2} in the main paper. Here we provide the proof of proposition 2.

\begin{proposition}[in main paper]
    The training processes of elastic discriminator $\max \mathcal{L}_{D}$ (if $\varepsilon >$ a critical value) and generator $\min \mathcal{L}_{G}$ are stable. 
\end{proposition}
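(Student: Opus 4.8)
<br>

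\textbf{Proof proposal.} The plan is to analyze both updates as gradient flows on distributions — the generated feature law $q_\phi = (D_\phi\circ G_\theta)_\#\mathcal{N}(0,I)$ and the data feature law $p_\phi = (D_\phi)_\#\mathbb{P}_{data}$ — and to read off stability from the Fourier symbols of the kernels, using the representation established in Appendix~\ref{Appendix1}. The first step is to record the relevant symbols on the $n$-dimensional feature space: with cut-offs removed, $1/r^{n-1}$ has Fourier symbol proportional to $|\xi|^{-1}$, so $\mathcal{M}_e(\mu,\nu)=c_1\sum_\xi |\xi|^{-1}|\hat\mu(\xi)-\hat\nu(\xi)|^2$ (this is Proposition~1 rewritten), while the stabilizing kernel $1/r^{m-1}$ has symbol proportional to $|\xi|^{m-1-n}$, giving $\mathcal{M}_{e_s}(\mu,\nu)=c_2\sum_\xi |\xi|^{m-1-n}|\hat\mu(\xi)-\hat\nu(\xi)|^2$, i.e. a higher-order semi-norm whose weight $|\xi|^{m-1-n}$ dominates $|\xi|^{-1}$ as $|\xi|\to\infty$ precisely because $m>n$. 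The cut-offs only modify these symbols at frequencies $|\xi|\gtrsim 1/R$ and keep every bilinear form finite, so the idealized symbols are the right objects for a linear-stability statement.

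For the generator, $\mathcal{L}_G$ differs from $\mathcal{M}_e\big(p_\phi,q_\phi\big)$ only by the self-energy term $\mathbb{E}_{x\sim p}\mathbb{E}_{y\sim p}e(D_\phi(x),D_\phi(y))$, which is independent of $\theta$; hence $\min_\theta\mathcal{L}_G$ is the minimization of a nonnegative functional that, by Proposition~1, vanishes exactly when $q_\phi=p_\phi$. Since its symbol $|\xi|^{-1}$ is nonnegative, the associated (Wasserstein, and hence parameter) gradient flow is dissipative: along it $\tfrac{d}{dt}\mathcal{L}_G=-\|\nabla_\theta\mathcal{L}_G\|^2\le 0$, the loss is bounded below, and linearizing about an equilibrium produces only non-positive growth rates. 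This is exactly the assertion that $\min\mathcal{L}_G$ is a stable (indeed monotone) process, with no extra term required.

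For the discriminator the situation is genuinely different, and this is where the critical value of $\varepsilon$ enters. The step here is to compute the force that gradient \emph{ascent} on $\mathcal{L}_D$ exerts on the feature points. The $\mathcal{M}_e$ part contributes, within each class ($p_\phi$-features among themselves and $q_\phi$-features among themselves), a short-range \emph{attractive} force — explicitly $\nabla_x e(\|x-y\|)=-\,r^{n-2}R^{-(2n-1)}(x-y)$ for $r\le R$ — which drives the feature laws toward collapsed, degenerate configurations, the precise mechanism behind the feature-space mode collapse in Figure~\ref{Fig:extraterm feature}. The term $-\varepsilon\mathcal{M}_{e_s}$ instead contributes a within-class \emph{repulsive} force of magnitude $\varepsilon\,r^{m-2}R^{-(2m-1)}\|x-y\|$; comparing the two at the scale $r\sim R$ at which points first enter the cut-off region shows the net within-class force is repulsive once $\varepsilon>\varepsilon_c$ with $\varepsilon_c$ of order $R^{m-n}$. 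Equivalently, in Fourier language, the ascent amplifies a perturbation $\psi$ of $p_\phi-q_\phi$ at rate $\propto |\xi|^{-1}-\varepsilon|\xi|^{m-1-n}$, which — since $m>n$ — is negative for all $|\xi|$ above a threshold and, once $\varepsilon$ exceeds the critical value, is non-positive on the whole admissible frequency band of the bounded feature domain; hence the ascent no longer blows up any mode and $\max\mathcal{L}_D$ is stable.

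The main obstacle I anticipate is the discriminator half: unlike the generator, it is a maximization, and the maximized quantity is reparametrized through the nonlinear map $\phi\mapsto D_\phi$, so a fully rigorous argument must control how perturbations of $\phi$ translate into perturbations of the pushforwards $p_\phi,q_\phi$ and must justify the idealized symbols against their cut-off versions. I would handle this by working to leading order — linearizing the pushforward map and the energies about a reference configuration — and by using the cut-off to guarantee all the quadratic forms are bounded operators, so that ``stability'' is made precise as: the linearized descent operator has spectrum $\le 0$ (generator) and, for $\varepsilon>\varepsilon_c$, the linearized ascent operator has spectrum $\le 0$ on the span of admissible feature perturbations (discriminator).
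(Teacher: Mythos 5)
Your proposal is correct and follows essentially the same route as the paper's Appendix~\ref{Appendix3} proof: linearize the (Wasserstein) gradient flow/ascent of the energy about a uniform equilibrium and read stability off the Fourier symbols, with the generator stable because the base symbol $|\bm{\xi}|^{-1}$ is nonnegative, and the discriminator ascent stabilized because the higher-order symbol of $e_s$ dominates so that, on the bounded frequency band of the normalized feature domain, $\varepsilon$ above a critical value makes every admissible mode decay. Your added real-space force balance giving $\varepsilon_c \sim R^{m-n}$ and your explicit acknowledgement of the unaddressed $\phi\mapsto(p_\phi,q_\phi)$ linearization are reasonable refinements, but the core argument coincides with the paper's.
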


\begin{proof}
   In this proof, we will demonstrate the stability of the training process of the generative model $G_{\theta}$, the instability of the training process of the elastic discriminator $D_{\phi}$ without the stabilizing term, and the stability of the training process of $D_{\phi}$, by using the corresponding PDEs of the Wasserstein gradient flow. For the convenience of illustration, we prove the two-dimensional example here, and the proof of the high-dimensional case is similar.

In order to prove the stability, we assume that the distribution of samples generated by the $G_{\theta}(X) \sim \mathbb{P}_{\theta}$ and the distribution of samples after feature transformation $D_{\phi}(X) \sim \mathbb{P}_{\phi}$ should approximate the distribution $\mathbb{P}_{*}$ with density function $P_{*} \equiv C$ $(C>0)$ i.e., $\mathbb{P}_{*}$ is a uniform distribution.
\begin{itemize}
    \item \textbf{The training process of generator $G_{\theta}$ to $\min \mathcal{L}_{G}$ is stable.} For the training process of $G_{\theta}$, the goal is to 
    \begin{equation}
        \min E[P_{*},P_{\theta}],
    \label{Eqn objective G}
    \end{equation}
where $P_{\theta}$ is the distribution density function for the generative samples. Thus the corresponding Wasserstein gradient flow for $P_{\theta}$ is:
\begin{equation}
\begin{aligned}
      \frac{\partial P_{\theta} }{\partial t} =& \nabla \cdot\left(P_{\theta} \nabla \frac{\delta E}{\delta P_{\theta}}\right) \\
      =& \nabla \cdot\left(P_{\theta} \nabla \bigg(2\mathbb{E}_{y \sim \mathbb{P}_{\theta}}(\frac{1}{r}) - 2\mathbb{E}_{y \sim \mathbb{P}_{*}}(\frac{1}{r})\bigg)\right) \\
      =& \nabla \cdot\bigg(P_{\theta} \nabla \bigg(2\int_{\mathbb{R}^2}(\frac{1}{r})P_{\theta}d\Omega_{\mathbf{y}} \\ 
      &\quad \quad \quad \quad \quad - 2\int_{\mathbb{R}^2}(\frac{1}{r})Cd\Omega_{\mathbf{y}}\bigg)\bigg).
\end{aligned}
\label{Eqn: EvoG}
\end{equation}
\\
Knowing that the above PDE has a solution $P_{\theta} = C$, we let $P_{\theta} = u = C+ v$ where $v$ is a small perturbation and $v \ll 1$. Substitute it into Eqn.(\ref{Eqn: EvoG}) and since $v \ll 1 $, we keep only the linear terms of $v$, which is
\begin{equation}
\begin{aligned}
    \frac{\partial v}{\partial t} =& C \Delta  \int_{\mathbb{R}^2}(\frac{1}{r})vd\Omega_{\mathbf{y}} \\
    =&  C \Delta  (\frac{1}{r} * v)(\mathbf{x}).
\end{aligned}
\label{Eqn:Stable1}
\end{equation}
Taking Fourier Transform on both sides of Eqn.(\ref{Eqn:Stable1}), we have
\begin{equation}
    \frac{d \hat{v}}{dt} = -C|\bm{\xi}|^{2}\hat{v}\mathcal{F}(\frac{1}{r}) = -C|\bm{\xi}|\hat{v},
\end{equation}
where $\hat{v}$ is the Fourier transform of $v$ and $\bm{\xi}$ is the Fourier coefficients. 
Thus in the Fourier Space, the solution for the perturbation term $\hat{v}$ is
\begin{equation}
    \hat{v} = e^{-C|\bm{\xi}|t}.
\end{equation}
Thus the perturbations with all $|\bm{\xi}|$ decay, i.e., $|\hat{v}| = |e^{-C|\bm{\xi}|t}| \rightarrow 0$ as $t \rightarrow \infty$. 

Thus, in this case, the equilibrium solution for the Eqn.(\ref{Eqn objective G}), $P_{\theta} = P_{*} = C$ is stable, which can also indicate that the training process of $G_{\theta}$ to $\min \mathcal{L}_{G}$ is stable.

\item \textbf{The training process of the elastic discriminator $D_{\phi}$ is unstable without the stabilizing term.}
For the training process of $D_{\phi}$ under the guidance of  $\min \mathcal{L}_{D}$ without the stabilizing term, the goal is to 
    \begin{equation}
        \max E[P_{*},P_{\phi}],
    \label{Eqn objective D}
    \end{equation}
where $P_{\phi}$ is the distribution density function for the samples after feature transformation $D_{\phi}$. Thus the corresponding Wasserstein gradient flow for $P_{\phi}$ is
  \begin{equation}
 \frac{\partial P_{\phi} }{\partial t} = -\nabla \cdot\left(P_{\phi} \nabla \frac{\delta E}{\delta P_{\phi}}\right).
  \end{equation}  
  
The entire analysis process is consistent with the above $G_{\theta}$ case.

For this case, in the Fourier Space, the solution for the perturbation term $\hat{v}$ is
\begin{equation}
    \hat{v} = e^{C|\bm{\xi}|t}.
\end{equation}
Thus the perturbations with all $|\bm{\xi}|$ decay, i.e., $|\hat{v}| = |e^{C|\bm{\xi}|t}| \rightarrow \infty$ as $t \rightarrow \infty$.

Thus, in this case, the equilibrium solution for the Eqn.(\ref{Eqn objective D}), $P_{\phi} = P_{*} = C$ is unstable, which can also indicate that the training process of $D_{\phi}$ to $\max \mathcal{L}_{D}$ is unstable without the stabilizing term in $\mathcal{L}_{D}$.

\item \textbf{The training process of the elastic discriminator $D_{\phi}$ to $\max \mathcal{L}_D$ is stable.}

From the main paper Equation (8) in section 3, for the training process of $D_{\phi}$, the goal is to 
    \begin{equation}
        \max \widetilde{E}[P_{*},P_{\phi}],
    \label{Eqn objective D_s}
    \end{equation}
where $P_{\phi}$ is the distribution density function for the samples after feature transformation $D_{\phi}$. Thus the corresponding Wasserstein gradient flow for $P_{\phi}$ is
  \begin{equation}
  \begin{aligned}
 \frac{\partial P_{\phi} }{\partial t} &= -\nabla \cdot\left(P_{\phi} \nabla \frac{\delta \widetilde{E}}{\delta P_{\phi}}\right)\\
   =& \nabla \cdot\left(P_{\phi} \nabla \bigg(2\mathbb{E}_{y \sim \mathbb{P}_{*}}\widetilde{e}(r) - 2\mathbb{E}_{y \sim \mathbb{P}_{\phi}}\widetilde{e}(r)\bigg)\right) \\
      =& \nabla \cdot\bigg(P_{\phi} \nabla \bigg(2\int_{\mathbb{R}^2}\widetilde{e}(r)Cd\Omega_{\mathbf{y}}\\ 
      &\quad \quad \quad \quad \quad - 2\int_{\mathbb{R}^2}\widetilde{e}(r)P_{\phi}d\Omega_{\mathbf{y}}\bigg)\bigg),
    \end{aligned}
    \label{Eqn: EvoD}
  \end{equation}  
where $\widetilde{e}(r) = \frac{1}{r} - \varepsilon \frac{1}{r^3}$.

Knowing that the above PDE has a solution $P_{\phi} = C$, we let $P_{\phi} = u = C+ v$ where $v$ is a small perturbation and $v \ll 1$. Substitute it into Eqn.(\ref{Eqn: EvoD}) and since $v \ll 1 $, we keep only the linear terms of $v$, which is
\begin{equation}
\begin{aligned}
    \frac{\partial v}{\partial t} =& -C \Delta  \int_{\mathbb{R}^2}(\frac{1}{r}-\varepsilon \frac{1}{r^3})vd\Omega_{\mathbf{y}} \\
    =&  -C \Delta  \bigg((\frac{1}{r}-\varepsilon \frac{1}{r^3}) * v \bigg)(\mathbf{x}).
\end{aligned}
\label{Eqn:Stable2}
\end{equation}

Taking the Fourier Transform on both sides of Eqn.(\ref{Eqn:Stable2}), we have
\begin{equation}
\begin{aligned}
    \frac{d \hat{v}}{dt} =& C|\bm{\xi}|^{2}\hat{v}\mathcal{F}(\frac{1}{r}-\varepsilon \frac{1}{r^3}) \\=&  C|\bm{\xi}|^{2}\hat{v}\bigg(\mathcal{F}(\frac{1}{r})-\varepsilon \mathcal{F}(-\Delta\frac{1}{r})\bigg)
    \\=&  C|\bm{\xi}|^{2}\hat{v}\bigg(\frac{1}{|\bm{\xi}|}-\varepsilon |\bm{\xi}|\bigg)
    \\=&  C\hat{v}(1-\varepsilon |\bm{\xi}|^2)|\bm{\xi}|,
\end{aligned}
\label{Eqn:26}
\end{equation}
where $\hat{v}$ is the Fourier transform of $v$ and $\bm{\xi}$ is the Fourier coefficients. 
Thus in the Fourier Space, the solution for the perturbation term $\hat{v}$ is
\begin{equation}
    \hat{v} = e^{C(1-\varepsilon |\bm{\xi}|^2)|\bm{\xi}|t}.
\end{equation}
Knowing that in the training of $D_{\phi}$, we always normalize the input data $X$ into a finite domain, i.e., $[-1,1] \times [-1,1]$. And with $\bm{\xi} = (m,n)$, we know that $|\bm{\xi}|$ is bounded and $\min |\bm{\xi}| = \pi$ for $|\bm{\xi}|$ not equal to 0. When $|\bm{\xi}| = 0$, it can be observed from Eqn.(\ref{Eqn:26}) that $\frac{d \hat{v}}{dt} = C\hat{v}(1-\varepsilon |\bm{\xi}|^2)|\bm{\xi}| = 0$. Hence, the perturbation does not grow, and the solution remains stable. Thus if $\varepsilon $ is larger than a critical value, we have $(1-\varepsilon |\bm{\xi}|^2) < 0$ the perturbations with all $|\bm{\xi}|$ decay, i.e, $|\hat{v}| = |e^{C(1-\varepsilon |\bm{\xi}|^2)|\bm{\xi}|t}| \rightarrow 0$ as $t \rightarrow \infty$. 

Thus, in this case, the equilibrium solution for the Eqn.(\ref{Eqn objective D_s}), $P_{\phi} = P_{*} = C$ is stable, which can also indicate that the training process of $D_{\phi}$ to $\max \mathcal{L}_{D}$ is stable (if $\varepsilon $ is larger than a critical value).

\item \textbf{General $P_{*}$ case.}
The stability and instability are local effects. In this case where $P_{*}$ is not constant, $P_{*}$ can still be approximated as a constant locally, which allows the above analysis to be applied.

\end{itemize}
\end{proof}

\section{Implement details} \label{Appendix4}

We describe the network architectures in our EIEG GAN. For the 25-Gaussians example, we use multi-layer perceptron (MLP) networks for the generator and the elastic discriminator. For the image generation example,  we use convolutional neural network (CNN) architectures.

\paragraph{25-Gaussians Example.}
\begin{itemize}
    \item The MLP elastic discriminator in EIEG GAN takes a 2-dimensional tensor as the input. Its architecture has a set of fully-connected layers (fc marked with input-dimension and output-dimension) and LeakyReLU layers (hyperparameter set as 0.2): \textsl{fc (2 $\rightarrow$ 100), LeakyReLU, fc (100 $\rightarrow$ 50), LeakyReLU, fc (50 $\rightarrow$ 2)}.
    \item  The MLP generator network in EIEG GAN takes a 2-dimensional random Gaussian variables as the input. Its architecture: \textsl{fc (2 $\rightarrow$ 100), LeakyReLU, fc (100 $\rightarrow$ 50), LeakyReLU, fc (50 $\rightarrow$ 2)}.
\end{itemize}

\paragraph{Image generation.}
\begin{itemize}
    \item The CNN elastic discriminator in EIEG GAN takes a $B \times C \times H \times W$ tensor as the input. Its architecture has a set of convolution layers (conv marked with input-c, output-c, kernel-size, stride, padding), Batch Normalization layers (BN) and LeakyReLU layers (hyperparameter as 0.2): \textsl{conv (3,64,4,2,1), LeakyReLU, conv (64,128,4,2,1), BN, LeakyReLU, conv (128,256,4,2,1), BN, LeakyReLU, conv (256,512,4,2,1)}.
    \item  The CNN generator network in EIEG GAN given a $100$ dimensional random Gaussian variables: \textsl{conv (100,256,4,2,0), BN, ReLU, conv (256,128,4,2,1), BN, ReLU, conv (128,64,4,2,1), BN, ReLU, conv (64,32,4,2,1), Tanh.}.
\end{itemize}

\paragraph{Hyperparameters}
The hyperparameters are given in Table\ref{Table: Hyperpara_EIEGGAN}.

\begin{table}[!hbtp]
\begin{center}
\begin{tabular}{cc}
\hline
Hyperparameters& settings\\
\hline
Cut-off $R_1$  & 0.1\\
Cut-off $R_2$ (stabilizing) & 0.8 \\
Stabilizing coefficient $\varepsilon$ & 1 \\
Learning rate $Lr_D$ & 1e-5\\
Learning rate $Lr_G$ & 1e-4\\
Discriminator iteration $n_c$ & 3\\
Batch size $B$ & 64\\
Input dim & 64\\
Generative size $H \times W$ & $32 \times 32$\\
\hline
\end{tabular}
\end{center}
\caption{Hyperparameters for training EIEG GAN.}
\label{Table: Hyperpara_EIEGGAN}
\end{table}

\section{Analysis of experimental results in the main paper} \label{section5}

In this section, we present more detailed analysis of the experiments presented in the paper.

\subsection{KDE plots for 25-Gaussians Example.} \label{section5_1}
We have presented the data sample points of the 25-Gaussians Example in Figure 8 in the main paper. Here we visualize the generated points by kernel density estimation (KDE) in Fig.\ref{Fig: 55KDE}. As illustrated in Fig.\ref{Fig: 55KDE}, we can see that compared with Vanilla GAN and WGAN GP, our EIEG GAN has better performance.

\begin{figure}[!hbtp]
    \centering
    \begin{subfigure}
    \centering
        \includegraphics[width=0.4\textwidth]{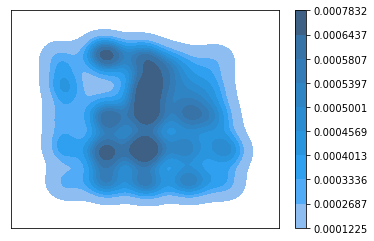}
    \end{subfigure}   
    \begin{subfigure} 
    \centering
        \includegraphics[width=0.4\textwidth]{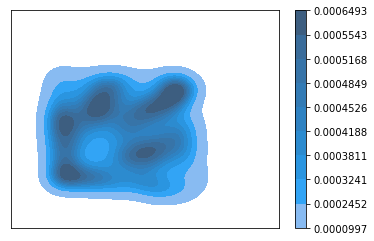}
    \end{subfigure}
    \begin{subfigure} 
    \centering
        \includegraphics[width=0.4\textwidth]{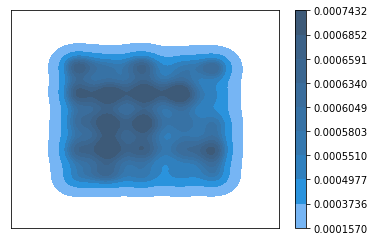}
    \end{subfigure}
    \caption{KDE plots of the generative samples, the first row is generated by Vanilla GAN \cite{ref1}, the second row is generated by WGAN GP \cite{ref2} and the last row is generated by our EIEG GAN.}
    \label{Fig: 55KDE}
\end{figure}

\subsection{Higher dimension of feature space may lead to better generative results.} \label{Appendix5_2}
As shown in the main paper, Table 1 in section 3, in our EIEG GAN higher dimension of feature space may lead to higher quality of generative samples. We believe this is because higher-dimensional feature spaces can capture more complex and subtle patterns in the data, which can lead to more realistic and varied generated samples. Here, we present more details for this property, including the generated samples and the corresponding t-SNE visualization \cite{refSNE}.
 
\begin{figure}[!hbtp]
    \centering
    \begin{subfigure}
    \centering
        \includegraphics[width=0.23\textwidth]{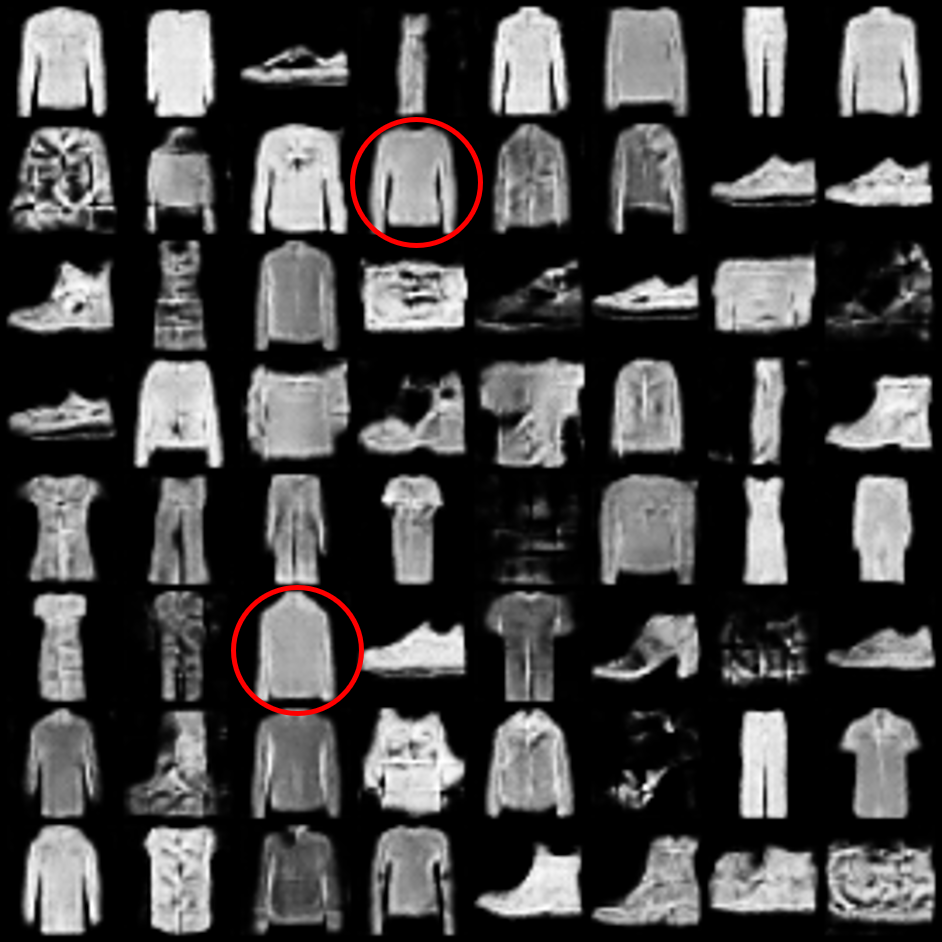}
    \end{subfigure}   
    \begin{subfigure} 
    \centering
        \includegraphics[width=0.23\textwidth]{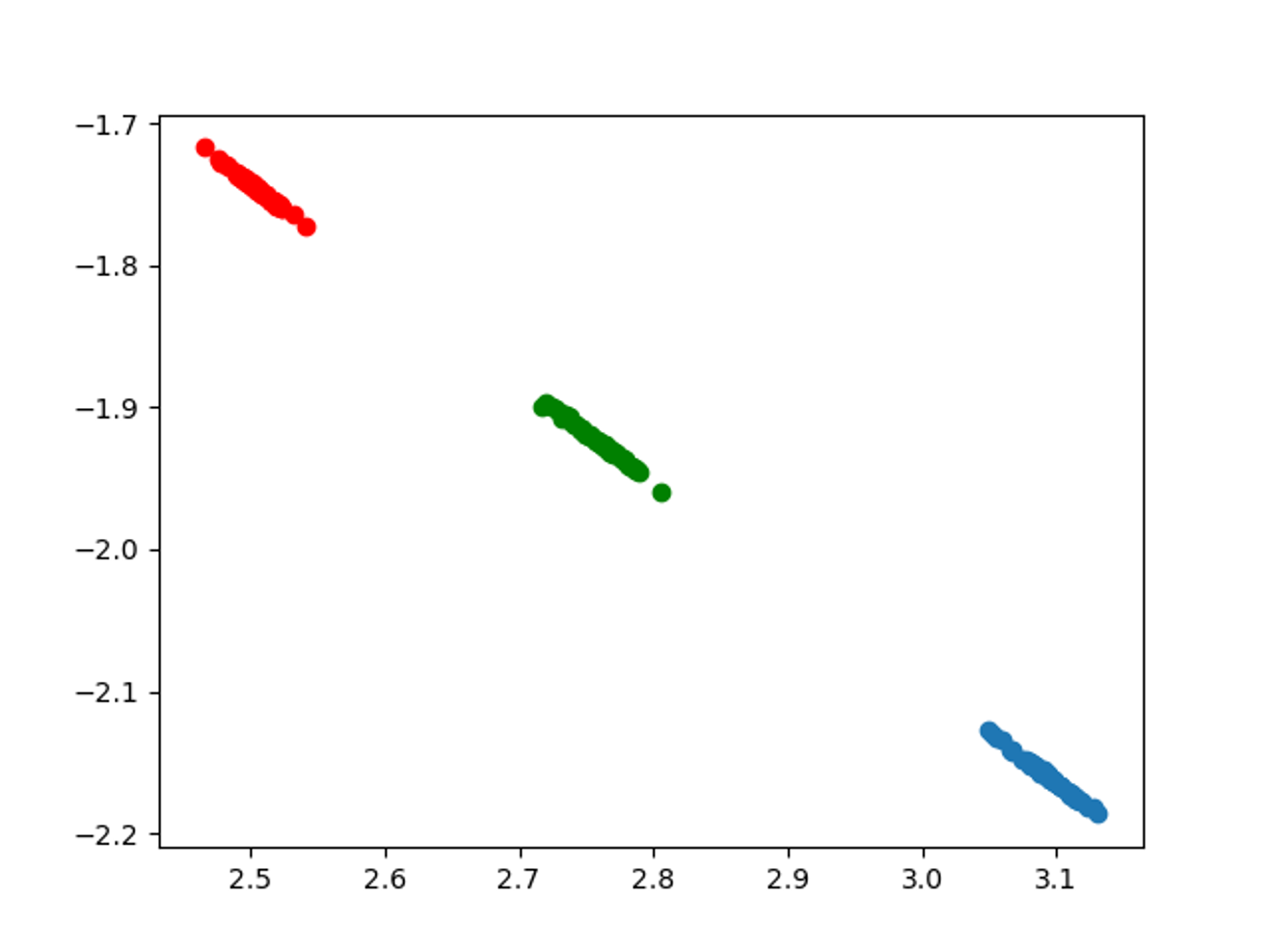}
    \end{subfigure}
    \begin{subfigure} 
    \centering
        \includegraphics[width=0.23\textwidth]{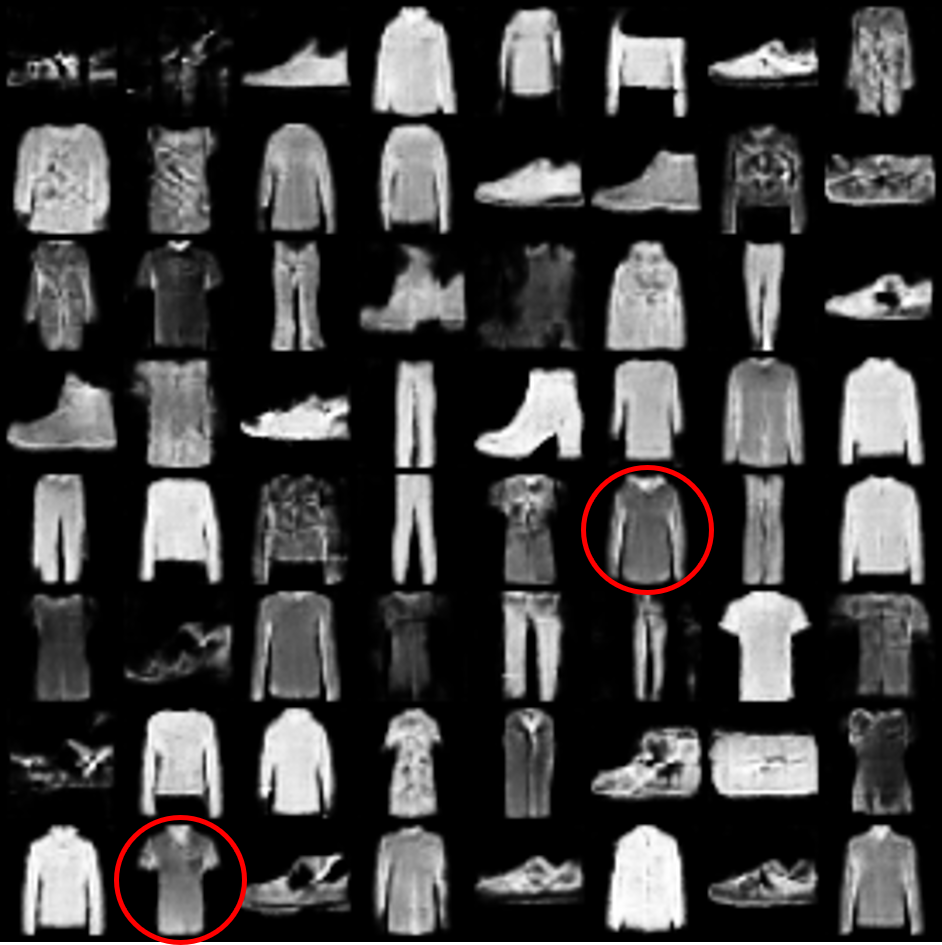}
    \end{subfigure}
    \begin{subfigure} 
    \centering
        \includegraphics[width=0.23\textwidth]{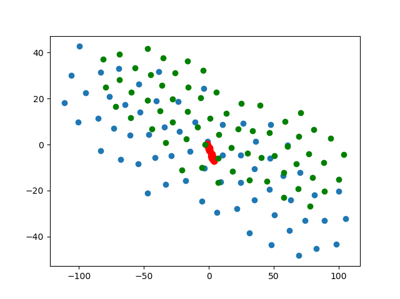}
    \end{subfigure}
    \begin{subfigure} 
    \centering
        \includegraphics[width=0.23\textwidth]{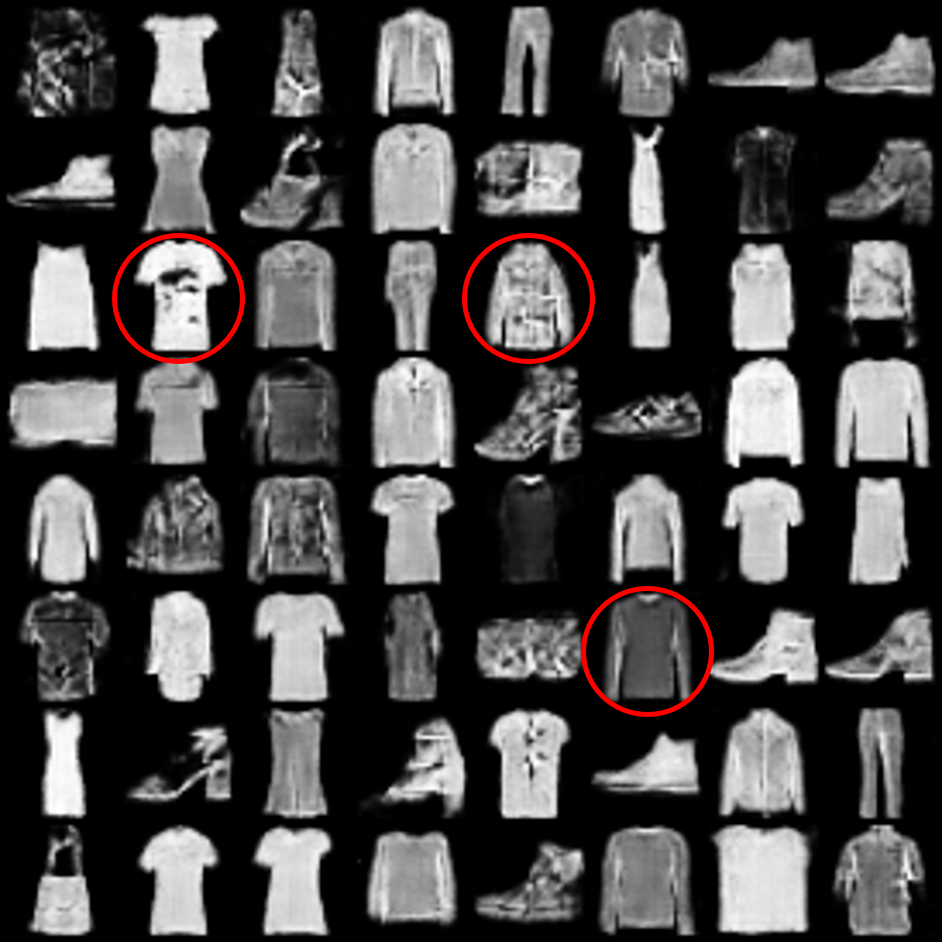}
    \end{subfigure}
    \begin{subfigure} 
    \centering
        \includegraphics[width=0.23\textwidth]{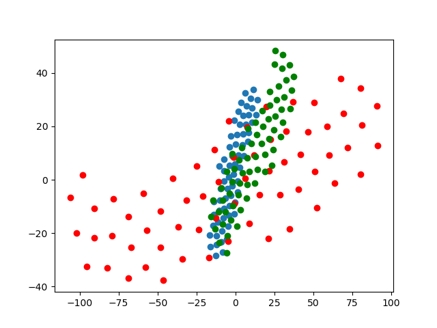}
    \end{subfigure}
    \caption{Generated samples on FashionMNIST and its corresponding t-SNE visualization. From top to bottom, the dimension of the feature space is $2$, $16$, and $32$. Right column: t-SNE visualization of the feature map. Blue points: feature points of the real samples. Red points: feature points of the generative samples before training $G_{\theta}$. Green points: feature points of the after training $G_{\theta}$. }
    \label{Fig: feature_size}
\end{figure}  

As presented in Fig.\ref{Fig: feature_size}, the level of detail preserved in the generative samples increases with the dimension of the feature space. When the dimension of the feature space is $2$ as shown in the first row of Fig.\ref{Fig: feature_size}, the generated samples may only capture the general outline of the clothing items in the original dataset. When the dimension of the feature space is increased to $16$ (second row of Fig.\ref{Fig: feature_size}) or $32$ (third row of Fig.\ref{Fig: feature_size}), the generative samples depict additional details such as collars and different patterns of clothing items. These observations confirm our intuition that an increased feature space dimensionality better captures the intricacies of the original dataset, resulting in higher quality generated images. We note that each dataset may have its own optimal feature space dimensionality.

In addition, the Inception Score \cite{ref21} increases as the dimensionality of the feature space increases as demonstrated in Fig.\ref{Fig: Learining curvefeature_size}.

\begin{figure}[!hbtp]
    \centering
    \begin{subfigure}
    \centering
        \includegraphics[width=0.4\textwidth]{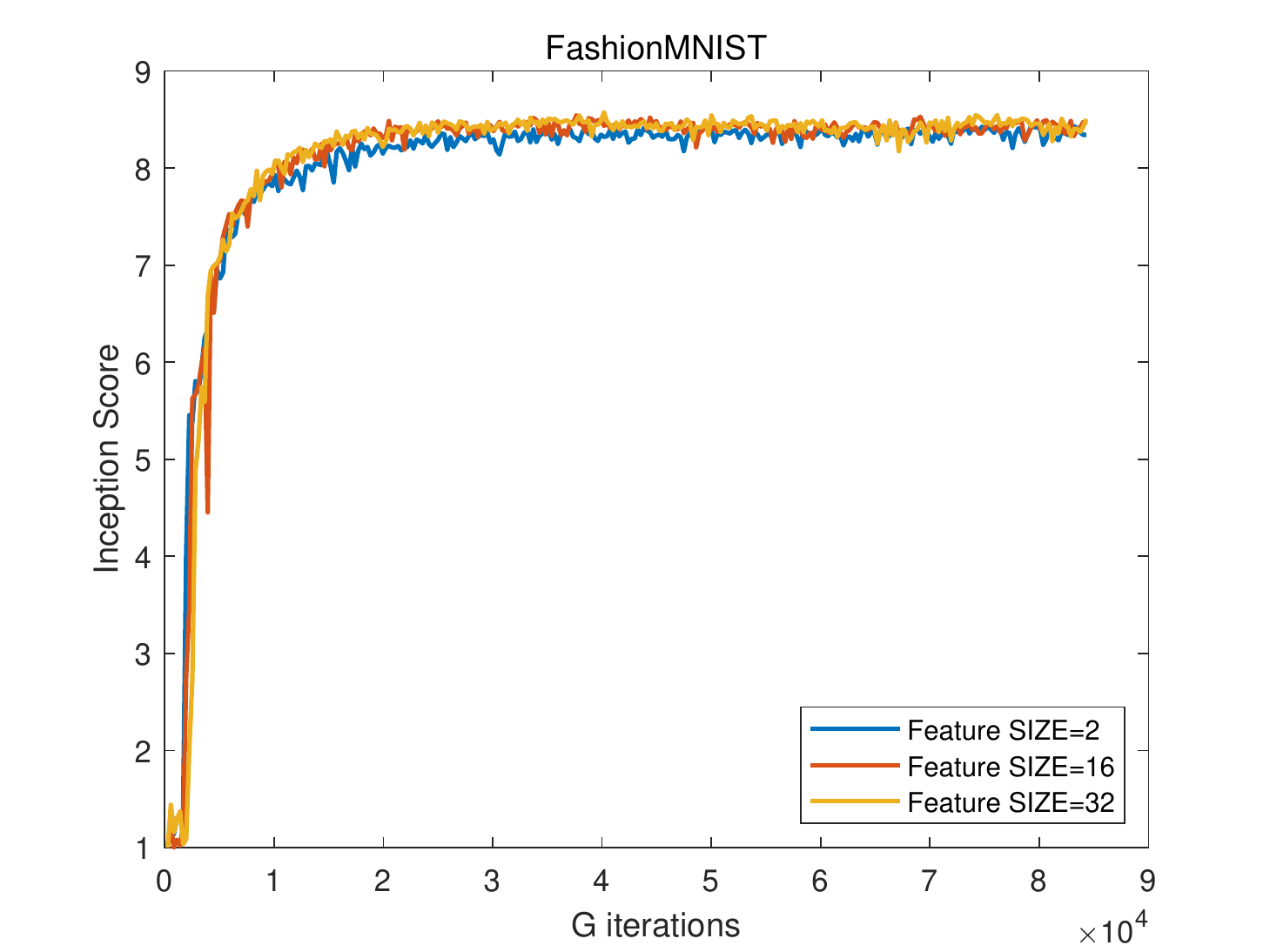}
    \end{subfigure}   
    \caption{Traning Curves on FashionMNIST with different dimensions of the feature space.}
    \label{Fig: Learining curvefeature_size}
\end{figure}

\end{appendices}
\end{document}